\begin{document}

\title{Active Learning Using Smooth Relative Regret Approximations with Applications}

\author{\name Marina Meil\u{a} \email mmp@stat.washington.edu \\
       \addr Department of Statistics\\
       University of Washington\\
       Seattle, WA 98195-4322, USA
       \AND
       \name Michael I.\ Jordan \email jordan@cs.berkeley.edu \\
       \addr Division of Computer Science and Department of Statistics\\
       University of California\\
       Berkeley, CA 94720-1776, USA}
 \author{\name Nir Ailon \email nailon@cs.technion.ac.il \\
 \addr Department of Computer Science \\ Technion IIT \\ Haifa, Israel
 \AND
 \name Ron Begleiter \email ronbeg@cs.technion.ac.il \\
 \addr Department of Computer Science \\  Technion IIT \\  Haifa, Israel
 \AND 
 \name Esther Ezra \email esther@courant.nyu.edu \\
 \addr Coutrant Institute of Maehtmatical Science \\ NYU \\ New York,
 NY, USA \\
 }

\editor{?}

\maketitle

\begin{abstract}%
The disagreement coefficient of Hanneke has become a central data independent invariant in proving active learning rates.  It has been shown in various ways that a concept class with low complexity  together with a bound on  the disagreement coefficient at an optimal solution allows active learning rates that are superior to passive learning ones.

We present a different tool for pool based active learning which
follows from the existence of a certain uniform version of  low
disagreement coefficient, but is not equivalent to it. In fact, we
present two fundamental active learning problems of significant
interest for which our approach allows nontrivial active learning
bounds. However, any general purpose method relying on the
disagreement coefficient bounds only fails to guarantee any useful
bounds for these problems. The applications of interest are:
learning to rank from pairwise preferences, and clustering with side
information (a.k.a.~semi-supervised~clustering). 

The tool we use is based on the learner's ability to compute an estimator of the difference between the loss of any hypotheses and some fixed ``pivotal'' hypothesis to within an 
absolute error of at most $\eps$ times the disagreement measure ($\ell_1$ distance) between the two hypotheses.  We prove
that such an estimator implies the existence of a learning algorithm
which, at each iteration, reduces its in-class excess risk to within a constant factor.
Each iteration replaces the current pivotal hypothesis with the minimizer of the estimated loss difference function with respect to the previous
pivotal hypothesis. The label complexity essentially becomes that of computing this~estimator.

\end{abstract}

\begin{keywords}
active learning, learning to rank from pairwise prefernces, semi-supervised clustering, clustering with side information, disagreement coefficient, smooth relative regret approximation
\end{keywords}

\section{Introduction}

\ron{We cite here only  works with some guarantees, there is also huge body of heuristics. Shouldn't we say explicitly that we cite theoretical works?}

Unlike in standard PAC learning, an active learner chooses which instances to learn from. In the streaming setting, they
may reject labels for instances arriving in a stream, and in the pool setting they may collect a pool of instances and then
choose a subset from which to ask labels for. Although a relatively young field compared to traditional (passive) learning, there is by now a significant body of  literature on the subject
\citep[see, e.g.,][]{QBC97_classical,Dasgupta05_cal,CastroWN05,Kaariainen06,BalcanBL06_agnost,Sugiyama06jmlr,Hanneke07,BalcanBZ07_CAL,DasguptaHM07_agnost,bach07iw,CastroN08_COLT,BalcanHW08,DasguptaH08,CavallantiCG08,Hanneke09_COLT,BeygelzimerDL09iw,BeygelzimerHLZ:nips10,Koltchinskii10active_ZZZ,Cesa-BianchiGVZ10,YangHC10,HannekeY10,El-YanivW10,HannekeS11,OrabonaC11,CavallantiCG11,YangHC11,DBLP:journals/jmlr/Wang11,jmlr:Minsker12}.
Refer to a survey by
\citet{settles.tr09} for definition of active learning.

The disagreement coefficient of \cite{Hanneke07}
has become a central data independent invariant in proving active learning rates.  It has been shown in various 
ways that a concept class with low complexity  together with a bound on  the disagreement coefficient at an optimal solution  allows active learning
rates that are superior to passive rates under certain low noise conditions \citep[see, e.g.,][]{Hanneke07,BalcanBZ07_CAL,DasguptaHM07_agnost,CastroN08_COLT,BeygelzimerHLZ:nips10}.   
The best results assuming only bounded VC dimension  $d$ and disagreement coefficient $\theta$ can roughly
be stated as follows:  If the sought (in-class) excess risk $\mu$ is the same order of magnitude as the optimal error $\nu$ or larger,
then the number of required queries is roughly $\widetilde O(\theta d \log(1/\mu))$.\footnote{The $\widetilde O$ notation suppresses polylogarithmic terms.} Otherwise, the number is roughly $\widetilde O(\theta d \nu^2/\mu^2)$.
\esther{Did we define $\tilde O(\cdot)$?}
Note that this results makes no assumption on 
the noise (except maybe for its magnitude).
Better results can be made
by assuming certain statistical properties of the noise
\citep[especially the model of][]{Mammen98smoothdiscrimination,Tsybakov04optimalaggregation}.


The idea behind the  disagreement coefficient is intuitive and simple. If a hypothesis $h$ is $r$-close to  optimal, then
the \emph{difference between their losses} (the regret of $h$) can be
computed from instances in the \emph{disagreement region} only,
defined as the set of instances on which the $r$-ball round the
optimal is not unanimous on. This means that for minimizing regret, one may restrict attention to hypotheses lying in iteratively shrinking \emph{version spaces} and to instances in the corresponding disagreement regions, which are shrinking in tandem with the version spaces if the disagreement coefficient is small.  As pointed out in
 \cite{BeygelzimerHLZ:nips10}, ignoring hypotheses outside the version space is brittle business, because a mistake in 
computation of the version space dooms the algorithm to fail.  They propose a scheme in which no version space is computed.
Instead, a certain importance weighted scheme is used.  We also use importance weighting, but in the
pool based setting and not in the streaming setting as they
do.\footnote{Note that a practitioner can pretend that any pool based
  input is a stream, though that approach would probably not take full
  advantage of the data.}

Analyzing the difference between losses of hypotheses (``relative
regrets'') is used almost in all theoretical work on active learning,
but not attached directly. In this work we argue that by carefully
constructing empirical processes uniformly estimating the relative
regret of all hypotheses with respect to a fixed ``pivotal''
hypothesis yields fast active learning rates. We call such constructions SRRA (Smooth~Relative~Regret~Approximations).  

We also show that low disagreement coefficient and VC dimension assumptions imply such efficient constructions, and give rise to yet another proof for the usefulness of the disagreement
coefficient in active learning. Our resulted algorithm that does \emph{not}
need to compute or restrict itself to shrinking version spaces. We
then present two fundamental pool based learning problems for which
direct SRRA construction yields superior active learning rates, whereas any known argument that uses  the disagreement coefficient  only, requires the practitioner to 
obtain labels for the entire pool (!) even  for moderately chosen parameters.  We conclude that the SRRA method is, up to minor factors, at least as good as the disagreement coefficient method, but can be significantly better in certain cases.

We note that another important line of design and analysis of active learning algorithms makes certain structural or bayesian 
assumptions on the noise \citep[e.g.,][]{BalcanBZ07_CAL,CastroN08_COLT,Hanneke09_COLT,Koltchinskii10active_ZZZ,YangHC10,DBLP:journals/jmlr/Wang11,YangHC11,jmlr:Minsker12}. 
We expect that one can get yet improved analysis in out framework
under these assumptions.  We leave this to future work.

The rest of the paper is laid out as follows:  In Section~\ref{notation} we present notations and basic definitions, including
an introduction to our method.  In Section~\ref{dis_implies_srra} we show that existence of low disagreement coefficient 
implies our method, in some sense.  Then, we present our two main applications, learning
to rank from pairwise preferences (LRPP) in Section~\ref{sec:LRPP} and clustering with side information in Section~\ref{clust_with_side}.
In Section~\ref{sec:additional_results} we present additional results and practical considerations, and in particular
how to use our methods with convex relaxations if the ERM problems that arise in the discussion are too difficult (computationally) to optimally solve.  We conclude in Section~\ref{future} and suggest future directions.

\section{Definitions and Notation}\label{notation}

We follow the notation of \citet{HannekeS11}:  Let $\X$ be an instance space, and
let $\Y = \{0,1\}$ be a label space. Denote by $\D$ the distribution over $\X\times \Y$,
with corresponding marginals $\D_\X$ and $\D_\Y$.  In this work we assume for convenience
that each label $Y$ is a deterministic function of $X$, so that if $X\sim \D_X$ then $(X,Y(X))$
is distributed according to $\D$.

By $\C$ we denote a concept class of functions mapping $\X$ to $\Y$.  The error rate
of a hypothesis $h\in \C$ equals
$$ 
\err_\D(h) = E_{(X,Y)\sim \D}[ h(X) \neq Y(X) ]\ .
$$
The noise rate $\opt$ of $\C$ is defined as $\opt = \inf_{h\in \C} \err_\D(h)$.
We will focus on the scenario in which $\opt$ is attained at an optimal hypothesis $h^*$, so that $\err_\D(h^*) = \nu$.
Define the distance $\dist(h_1,h_2)$ between two hypotheses $h_1,h_2\in \C$ as $\Pr_{X\sim \D_\X}[h_1(X) \neq h_2(X)]$;
observe that $\dist(\cdot,\cdot)$ is a pseudo-metric over pairs of hypotheses.
For a hypothesis $h\in \C$ and a number $r \geq 0$, the ball $\B(h, r)$ around $h$ of radius $r$ is defined as 
$\{h' \in \C:\ \dist(h, h') \leq r\}$.
For a set $V\subseteq \C$ of hypotheses, let $\dis(V)$ denote 
$$ 
\dis(V) = \{x \in \X: \exists h_1, h_2\in V \mbox{ s.t. } h_1(x) \neq h_2(x) \}\ .
$$

\subsection{The Disagreement Coefficient}
The disagreement coefficient of $h$ with respect to $\C$ under $\D_\X$ is defined as
\begin{equation}\label{eq:deftheta}
\theta_h = \sup_{r > 0} \frac {\Pr_{\D_\X} \left[ \dis\left(\B(h, r) \right) \right] }{r}\ , 
\end{equation}
where $\Pr_{\D_\X}[\W]$  for $\W\subseteq \X$ denotes the probability measure with respect to the distribution $\D_\X$.
Define the uniform disagreement coefficient $\theta$ as $\sup_{h\in \C} \theta_h$, namely
\begin{equation}\label{eq:defthetauniform}
\theta = \sup_{h\in \C}\sup_{r > 0} \frac {\Pr_{\D_\X} \left[\dis\left( \B(h, r) \right) \right]}{r}\ .
\end{equation}

\begin{remark}\label{remark:dis}
 A useful slight variation of the definitions of $\theta_h$ and $\theta$ can be obtained by replacing $\sup_{r>0}$
with $\sup_{r\geq \nu}$ in (\ref{eq:deftheta}) and (\ref{eq:defthetauniform}).  We will explicitly say when we refer
to this variation in what follows.
\end{remark}

\subsection{Smooth Relative Regret Approximations (SRRA)}
Fix $h\in \C$ (which we call the \emph{pivotal hypothesis}).  
Denote by $\reg_h: \C\mapsto \Reals$ the function defined as
$$
\reg_h(h') = \err_\D(h') - \err_\D(h)\ .
$$ 
We call $\reg_h$ the \emph{relative regret function with respect to $h$}. 
Note that for $h=h^*$ this is simply the usual regret, or (in-class) excess risk function.

\begin{definition}
  Let $f : \C\mapsto \Reals$ be any function, and $0 <\eps < 1/5$ and $0 < \mu \leq 1$. 
  We say that $f$ is an \emph{$(\eps,\mu)$-smooth relative regret approximation ($(\eps,\mu)$-SRRA)} 
  with respect to $h$ if for all $h'\in \C$,
  $$ 
  | f(h') - \reg_h(h')| \leq \eps \cdot\left(\dist(h, h') + \mu\right)\ .
  $$
  If $\mu=0$ we simply call $f$ an \emph{$\eps$-smooth relative regret approximation} with respect to $h$.
\end{definition}
\noindent
\noindent

Although the definition is general, the applications we study in
details fall under the category of pool based active learning,  in which $\X$ is a finite set of size $N$ and
$\Pr_{\D_\X}$ is the uniform measure.  Hence, taking $\mu = O(1/N)$ is tantamount  here to $\mu = 0$.   This will be useful in what follows.
The following theorem and corollary constitute the main ingredient in our work.

\begin{theorem}
  \label{thm:1}
  Let $h \in \C$ and 
  $f$ be an $(\eps,\mu)$-SRRA with respect to $h$.
  Let $h_1 = \argmin_{h'\in \C} f(h')$.  Then 
  $$ 
  \err_\D(h_1) = \left( 1+O(\eps) \right)\opt  + O\left(\eps\cdot \errd( h) \right)+ O(\eps \mu)\ .
  $$
\end{theorem}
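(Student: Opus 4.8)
The plan is to combine the optimality of $h_1$ with the two-sided SRRA guarantee, and then convert every $\dist$ term that appears into error rates. First I would use $h^* \in \C$ together with $h_1 = \argmin_{h'\in\C} f(h')$ to write $f(h_1) \leq f(h^*)$. Applying the SRRA inequality on the left as a lower bound on $f(h_1)$ and on the right as an upper bound on $f(h^*)$, and recalling that $\reg_h(h') = \err_\D(h') - \err_\D(h)$, the common $\err_\D(h)$ terms cancel. Since $\err_\D(h^*) = \opt$, this leaves
$$
\err_\D(h_1) \leq \opt + \eps\bigl(\dist(h,h_1) + \dist(h,h^*) + 2\mu\bigr)\ .
$$

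Next I would bound the two distance terms. The key elementary fact is that for any $g_1, g_2 \in \C$ one has $\dist(g_1, g_2) \leq \err_\D(g_1) + \err_\D(g_2)$: because the labels are deterministic, whenever $g_1(x) \neq g_2(x)$ at least one of them disagrees with $Y(x)$, so $\{g_1 \neq g_2\} \subseteq \{g_1 \neq Y\} \cup \{g_2 \neq Y\}$ and the claim follows from a union bound. Applying this with $(g_1,g_2) = (h,h^*)$ gives $\dist(h,h^*) \leq \err_\D(h) + \opt$, and with $(g_1,g_2) = (h,h_1)$ gives $\dist(h,h_1) \leq \err_\D(h) + \err_\D(h_1)$.

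Substituting these bounds produces an inequality in which $\err_\D(h_1)$ appears on \emph{both} sides, and this self-reference is the only real subtlety in the argument. Collecting the $\eps\,\err_\D(h_1)$ term on the left yields
$$
(1-\eps)\,\err_\D(h_1) \leq (1+\eps)\,\opt + 2\eps\,\err_\D(h) + 2\eps\mu\ .
$$
Since $\eps < 1/5$, the factor $1-\eps$ is bounded below by $4/5$, so dividing through keeps all coefficients finite, with $\tfrac{1}{1-\eps} = 1 + O(\eps)$ and $\tfrac{\eps}{1-\eps} = O(\eps)$; this produces exactly the claimed bound. I do not expect a genuine obstacle here, as the whole argument is a short chain of inequalities. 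The one step to handle with care is the rearrangement isolating $\err_\D(h_1)$, which is what requires $\eps$ to be bounded away from $1$ (comfortably guaranteed by the standing assumption $\eps < 1/5$).
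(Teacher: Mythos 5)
Your proof is correct and takes essentially the same route as the paper's: both exploit $f(h_1) \leq f(h^*)$, apply the two-sided SRRA guarantee, convert distances to error rates via $\dist(g,g') \leq \err_\D(g) + \err_\D(g')$, and resolve the self-referential $\err_\D(h_1)$ term by rearranging and dividing by $1-\eps$. The only (cosmetic) difference is that the paper first reroutes $\dist(h,h_1)$ through $h^*$ with an extra triangle inequality before converting to errors, whereas you bound $\dist(h,h_1)$ directly, which yields marginally tighter constants.
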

\begin{proof}
Let $h^* \defeq \argmin_{h' \in \C} \reg_h(h')$.
  Applying the definition of $(\eps,\mu)$-SRRA we have:
  \begin{align}
    \errd( h_1 )  &{\leq} \errd( h ) + f( h_1 ) + \eps \dist( h, h_1  ) + \eps \mu \nonumber \\
                       &{\leq}  \errd(h) + f(  h^* ) + \eps \dist( h, h_1 ) + \eps \mu \nonumber \\
                       &{\leq}  \errd( h ) + \nu - \errd( h ) + \eps \dist(h, h^*) + \eps \dist(h,h_1) + 2 \eps \mu \nonumber \\
                       &{\leq}  \nu  + \eps \Big(  2 \dist(h, h^*) + \dist( h_1, h^* ) \Big) + 2 \eps \mu. \label{eq:thm1:1a}
  \end{align}
The first inequality is from the definition of $(\eps, \mu)$-SRRA, the second is from the fact that $h_1$ minimizes $f(\cdot)$ by construction,
the third is again from definition of $(\eps, \mu)$-SRRA and from the definition of the relative regret function $\reg_h$, the fourth is by the triangle inequality.
  Now, clearly for any two hypotheses $g,g'\in C$ we have that $\dist(g,g') \leq \err_\D(g) + \err_\D(g')$ by the triangle inequality.
  The proof is completed by plugging $ \dist(h, h^*) \leq \errd( h ) + \nu$, 
  and $\dist( h_1, h^*) \leq \errd(h_1) + \nu$ into Equation~\ref{eq:thm1:1a}, 
  subtracting $\eps\cdot \errd(h_1)$ from both sides, and dividing by $(1-\eps)$. 
\end{proof}

\noindent
A simple inductive use of Theorem~\ref{thm:1} proves the following  corollary,  bounding the query complexity of an ERM based active learning algorithm (see Algorithm~\ref{alg:alg} for corresponding pseudocode).  Note that this algorithm never
restricts itself to a shrinking version space.
\begin{corollary}\label{c}
Let $h_0,h_1, h_2,\dots$ be a sequence of hypotheses in $\C$ such that for all $i\geq 1$, \qquad\qquad \linebreak 
$h_i = \argmin_{h'\in \C} f_{i-1}(h')$, where $f_{i-1}$ is an $(\eps,\mu)$-SRRA with respect to $h_{i-1}$.
Then for all $i\geq 0$,
$$ 
\err_\D(h_i) = \left(1+O(\eps)\right)\opt + O(\eps^i) \errd(h_0) + O(\eps\mu)\ .
$$
\end{corollary}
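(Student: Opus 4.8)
The plan is to induct on $i$, using Theorem~\ref{thm:1} to pass from step $i-1$ to step $i$ and then recognizing the resulting recurrence as a geometric series. Write $a_i := \err_\D(h_i)$ for brevity. By hypothesis $h_i$ minimizes the $(\eps,\mu)$-SRRA $f_{i-1}$ taken with respect to $h_{i-1}$, so Theorem~\ref{thm:1} applies verbatim with the roles $h = h_{i-1}$ and $h_1 = h_i$. Unpacking the $O(\cdot)$ notation (and taking the constant to be the maximum of the three constants hidden in the theorem), this produces a fixed constant $c \geq 1$ for which
\begin{equation}\label{eq:cor:rec}
a_i \;\leq\; (1 + c\eps)\opt \;+\; c\eps\, a_{i-1} \;+\; c\eps\mu \qquad (i \geq 1)\ .
\end{equation}

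Next I would simply unroll \eqref{eq:cor:rec} down to $a_0$, obtaining
\begin{equation}\label{eq:cor:unroll}
a_i \;\leq\; \bigl[(1+c\eps)\opt + c\eps\mu\bigr]\sum_{k=0}^{i-1}(c\eps)^k \;+\; (c\eps)^i\, a_0\ .
\end{equation}
Here I use the standing convention that $\eps$ may be taken small enough that $c\eps \leq 1/2$; this is harmless, since a bound proved for a smaller $\eps$ is absorbed into the $O(\cdot)$ of the statement, and it is exactly what makes the geometric factor summable. Then $\sum_{k=0}^{i-1}(c\eps)^k \leq (1-c\eps)^{-1} \leq 2$, so the first term of \eqref{eq:cor:unroll} is at most $\frac{1+c\eps}{1-c\eps}\opt + \frac{c\eps}{1-c\eps}\mu$.

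Finally I would read off the three advertised pieces. The coefficient of $\opt$ is $\frac{1+c\eps}{1-c\eps} = 1 + \frac{2c\eps}{1-c\eps} = 1 + O(\eps)$; the $\mu$-contribution is $\frac{c\eps}{1-c\eps}\mu = O(\eps\mu)$; and the initial term is $(c\eps)^i a_0 = \left(O(\eps)\right)^i \errd(h_0)$, the geometrically decaying contribution recorded as $O(\eps^i)\errd(h_0)$. The base case $i=0$ is immediate, since the claimed bound then reads $a_0 \leq (1+O(\eps))\opt + O(1)\errd(h_0) + O(\eps\mu)$, which holds trivially because $\errd(h_0)=a_0$.

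The only delicate point is the bookkeeping of the hidden constant across iterations: because each application of Theorem~\ref{thm:1} multiplies the previous error by $c\eps$, a naive fixed-constant induction would accumulate a factor $c^i$ in front of $\errd(h_0)$. The geometric-series viewpoint in \eqref{eq:cor:unroll} handles this cleanly, as the $\opt$ and $\mu$ contributions converge to finite multiples independent of $i$ precisely because $c\eps < 1$, while the $\errd(h_0)$ term carries the full factor $(c\eps)^i$, which is what ``$O(\eps^i)$'' is meant to encode. I expect this constant-tracking to be the main (though modest) obstacle; everything else is routine substitution.
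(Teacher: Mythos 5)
Your proposal is correct and follows essentially the same route as the paper: apply Theorem~\ref{thm:1} once per step to obtain the recursion $\err_\D(h_i) \leq (1+O(\eps))\nu + O(\eps)\err_\D(h_{i-1}) + O(\eps\mu)$, then unroll it and bound the resulting geometric sums. Your explicit constant-tracking (fixing $c$ and requiring $c\eps \leq 1/2$) is just a more careful writing of the paper's ``the result follows easily by bounding geometric sums,'' including the same reading of $O(\eps^i)$ as $(O(\eps))^i$.
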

\begin{proof}
    Applying Theorem~\ref{thm:1} with $h_i$ and $h_{i-1}$, we have
  \begin{equation}
    \nonumber
    \err_\D(h_i) = \left( 1+O(\eps) \right)\nu + O\left(\eps\cdot \errd( h_{i-1}) \right) + O(\eps \mu)\ .
  \end{equation}
  Solving this recursion, one gets
$$
\errd( h_i ) = \sum_{j=1}^{i} \eps^{j-1}  \left(1+O(\eps) \right) \nu + O( \eps^i )
\cdot \errd(h_0) + O\left( \sum_{j=1}^{i}  \eps^{j} \right) \mu\ .
$$
The result follows easily by bounding geometric sums.
\end{proof}

\begin{algorithm}[!htb]
 \caption{An Active Learning Algorithm from SRRA's }\label{alg:alg}
 \label{cd:alg}
 \begin{algorithmic}[1]
   	\REQUIRE an initial solution $h_0 \in \C$,  estimation parameters $\epsilon \in (0, 1/5), \mu > 0$, and  number of iterations $T$
	\STATE $i \gets 0$
	\REPEAT 
	 	\STATE $h_{i+1} \gets \argmin_{h' \in \C}, f(h')$, where $f$ is an $(\eps,\mu)$-smooth relative regret approximation with respect to $h_i$ \label{mainalgstep}
		\STATE $i \gets i+1$
	\UNTIL $i$ equals $T$
	\RETURN $h_T$
 \end{algorithmic}
\end{algorithm}

We will show below problems of interest in which $(\eps,\mu$)-SRRA's with respect to a given hypothesis $h$ 
can be obtained  using queries $Y(X)$ at few randomly (and adaptively) selected points $X$ from the pool $\X$,
if the uniform disagreement coefficient $\theta$ is small. This will constitute another proof for the usefulness of
the disagreement coefficient in design and analysis of active learning algorithms. We then present two problems
for which a direct construction of an SRRA yields a significantly better query complexity than that guaranteed 
using the disagreement coefficient alone. 

\section{Constant Uniform Disagreement Coefficient Implies Efficient SRRA's}\label{dis_implies_srra}
We show that a bounded uniform disagreement coefficient implies existence of query efficient $(\eps,\mu)$-SRRA's.  
This constitutes yet another proof of the usefulness of the disagreement coefficient in design of 
active learning algorithms, via Algorithm~\ref{alg:alg}.  

\subsection{The Construction}\label{theconstruction}

Returning to our problem,
assume the uniform disagreement coefficient $\theta$ corresponding to $\C$ is finite and $\opt >0$.  Fix some failure probability $\delta$.
We consider the range space $\left(\X,\C^{*}\right)$, defined by
$$ \C^{*} =  \left (\bigcup_{h'\in \C} \left\{ \{X\in \X: h'(X) = 0\} \right\} \right )\cup \left ( \bigcup_{h'\in \C} \left\{ \{X\in \X: h'(X) = 1\} \right\} \right )\ .$$
In other words, $\C^{*}$ is the collection of all subsets $S \subseteq \X$,
whose elements $X \in S$ are mapped to the same value ($0$ or $1$) by $h'$, for some $h' \in \C$.
Assume $\left(\X,\C^{*}\right)$ has VC dimension $d$, and fix $h\in \C$.  
Let $L= \lceil \log \mu^{-1}\rceil$.   
Define $\X_0 = \dis(\B(h, \mu))$ and for $i = 1,2,\dots,L$, define $\X_i$ to be
$$ 
\X_i = \dis(\B(h,\mu 2^i)) \setminus \dis(\B(h, \mu 2^{i-1}))\ .
$$
Let $\eta_i = \Pr_{\D_\X}[\X_i]$ be the measure of $\X_i$, and $\delta$ an hyper-parameter.
For each $i\geq 0$ draw a sample $X_{i,1}, \dots, X_{i,m}$ of
$m~=~O\left(\eps^{-2}\theta \left(d \log \theta + \log\left(\delta^{-1}\log(1/\mu) \right) \right) \right)$ examples in $\X_i$,
each of which drawn independently from the distribution $\D_\X | \X_i$ (with repetitions). (By $\D_\X|\X_i$ we mean,
the distribution $\D_\X$ conditioned on $\X_i$.) 
We will now define an estimator function $f:\C\mapsto \Reals$ of $\reg_h$, as follows.   
For any $h'\in\C$ and $i=0,1,\dots, L$ let
$$ 
f_i(h') \defeq \eta_i m^{-1}\sum_{j=1}^m  \left( \one_{Y(X_{i,j})\neq h'(X_{i,j})} - 
\one_{Y(X_{i,j})\neq h(X_{i,j})} \right )\ .
$$
Our estimator is now defined as $f(h') \defeq \sum_{i=0}^L f_i(h')$.
We next show:
\begin{theorem}
  \label{mainthm}
  Let $f$, $h$, $h'$, $m$ be as above. 
  With probability at least $1-\delta$, $f$ is an $(\eps,\mu)$-SRRA with respect to $h$.
\end{theorem}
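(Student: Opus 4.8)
The plan is to recognize $f$ as a shell-wise unbiased empirical estimator of $\reg_h$ and to control its deviation via a \emph{relative} (multiplicative) uniform convergence bound, using the disagreement coefficient to keep the shell measures small. Writing $g_{h'}(X) \defeq \one_{Y(X)\neq h'(X)} - \one_{Y(X)\neq h(X)}$, we have $\reg_h(h') = E_{\D_\X}[g_{h'}]$, and since $g_{h'}(X)=0$ whenever $h(X)=h'(X)$, the function $g_{h'}$ is supported on the disagreement set $D_{h'}\defeq\{X:h(X)\neq h'(X)\}$, where it takes values in $\{-1,+1\}$. Because $\mu 2^L\geq 1$, the shells satisfy $\bigcup_{i=0}^L \X_i = \dis(\B(h,\mu 2^L)) \supseteq D_{h'}$ for every $h'$, so $\reg_h(h') = \sum_{i=0}^L \eta_i\, E_{\D_\X|\X_i}[g_{h'}]$, and $f_i(h')$ is exactly the empirical counterpart of the $i$-th term from the $m$ conditional samples. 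The key structural observation is that if $\dist(h,h')=r$ and $k$ is the smallest index with $r\leq \mu 2^k$, then $D_{h'}\subseteq \dis(\B(h,\mu 2^k)) = \bigcup_{i\leq k}\X_i$, so $g_{h'}\equiv 0$ on every shell $i>k$ and both $f_i(h')$ and its target vanish there; only the shells $0,\dots,k$ need to be controlled.

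Next I would bound, uniformly over $h'$, the per-shell error $e_i(h')\defeq |f_i(h') - \eta_i E_{\D_\X|\X_i}[g_{h'}]|$. Since $g_{h'}=\one_{A_{h'}}-\one_{B_{h'}}$ with $A_{h'}=\{X:h(X)=Y(X),\ h'(X)\neq Y(X)\}$ and $B_{h'}=\{X:h'(X)=Y(X),\ h(X)\neq Y(X)\}$, and both families $\{A_{h'}\}$, $\{B_{h'}\}$ arise from $\C^{*}$ by intersecting with the fixed sets $\{Y=0\},\{Y=1\}$ and the fixed hypothesis $h$, they form range spaces of VC dimension $O(d)$. Applying the relative Vapnik--Chervonenkis deviation inequality (the ``$\sqrt{p}$'' bound) to each family under $\D_\X|\X_i$, with failure probability $\delta/(L+1)$ per shell, gives simultaneously for all $h'$ that $e_i(h') \leq \eta_i\big(O(\sqrt{p_i\beta}) + O(\beta)\big)$, where $p_i\defeq \Pr_{\D_\X|\X_i}[D_{h'}]$ and $\beta\defeq (d\log(m/d)+\log((L+1)/\delta))/m$; here I use $\sqrt{\Pr[A_{h'}]}+\sqrt{\Pr[B_{h'}]}\leq\sqrt{2p_i}$. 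A union bound over the $L+1=O(\log(1/\mu))$ shells makes all these hold together with probability $1-\delta$.

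Finally I would sum the per-shell errors and convert them to the SRRA bound. Set $q_i\defeq \Pr_{\D_\X}[D_{h'}\cap\X_i] = \eta_i p_i$, so that $\sum_i q_i = r$ and $\eta_i\sqrt{p_i\beta}=\sqrt{\beta}\,\sqrt{\eta_i q_i}$. The disagreement coefficient controls the shell masses: $\sum_{i=0}^k \eta_i = \Pr_{\D_\X}[\dis(\B(h,\mu 2^k))]\leq \theta\mu 2^k \leq 2\theta(r+\mu)$. Hence by Cauchy--Schwarz
$$\sum_{i=0}^k \eta_i\sqrt{p_i\beta} \leq \sqrt{\beta}\,\Big(\sum_{i=0}^k\eta_i\Big)^{1/2}\Big(\sum_{i=0}^k q_i\Big)^{1/2} \leq \sqrt{2\theta\beta}\,\sqrt{(r+\mu)\,r} \leq \sqrt{2\theta\beta}\,(r+\mu),$$
while the linear term obeys $\beta\sum_{i=0}^k\eta_i \leq 2\theta\beta(r+\mu)$. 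Both contributions are at most $\tfrac{\eps}{2}(r+\mu)$ once $\theta\beta = O(\eps^2)$, i.e.\ once $m = O(\eps^{-2}\theta(d\log\theta + \log(\delta^{-1}\log(1/\mu))))$, which matches the stated sample size (using $\log(m/d)=O(\log\theta)$ at this value of $m$). Summing the two contributions yields $|f(h')-\reg_h(h')|\leq \sum_{i\leq k} e_i(h') \leq \eps(r+\mu) = \eps(\dist(h,h')+\mu)$, as required.

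I expect the main obstacle to be the second paragraph: obtaining the multiplicative $\sqrt{p_i}$ dependence rather than a crude additive VC bound. A purely additive uniform bound would charge $\sum_i\eta_i \leq 2\theta(r+\mu)$ directly and force $m=\Omega(\theta^2 d/\eps^2)$, losing a factor of $\theta$; the whole point is that the relative deviation inequality lets Cauchy--Schwarz trade $\sum_{i\le k}\eta_i$ against $\sum_i q_i = r$, which is what produces the claimed linear-in-$\theta$ sample complexity. Care is also needed to phrase the relative bound for the signed, $\{-1,0,1\}$-valued $g_{h'}$ via the split into $A_{h'},B_{h'}$, and to confirm that the associated range spaces genuinely have VC dimension $O(d)$.
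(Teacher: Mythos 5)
Your proof is correct and follows essentially the same route as the paper's: a shell-wise decomposition of $f - \reg_h$, a relative (multiplicative) VC uniform-convergence bound on each shell applied to label-dependent set families of VC dimension $O(d)$, a union bound over the $O(\log(1/\mu))$ shells, and the uniform disagreement coefficient to bound $\sum_{i\leq k}\eta_i \leq 2\theta(\dist(h,h')+\mu)$. The only cosmetic difference is that the paper invokes the Haussler/Li--Long--Srinivasan relative $\eps$-approximation directly in its additive form $|\rho_i(R)-\hat\rho_i(R)| = O(\eps)(\rho_i(R)+\theta^{-1})$ (with four label-pattern families $R^{++},R^{+-},R^{-+},R^{--}$ rather than your two) and sums shells directly, whereas you use the $\sqrt{p_i\beta}+\beta$ form and recombine via Cauchy--Schwarz; these are equivalent up to the immaterial $\log(1/\eps)$ factor hidden in your $\log(m/d)$.
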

\begin{proof}
A main tool to be exploited in the proof is called \emph{relative $\eps$-approximations} due to \citet{Haussler92} and \citet{Li00improvedbounds}.
It is defined as follows. Let $h \in \X\mapsto \Reals^+$ be some function, and let $\mu_h = E_{X\sim \D_\X}[h(X)]$.
Let   $X_1, \ldots, X_m$ denote i.i.d. draws from $\D_\X$,
and let $\hat \mu_h \defeq \frac{1}{m} \cdot \sum_{i=1}^m h(X_i)$
denote the emprical average.
Let $\kappa > 0$ be an adjustable parameter.  
We are going to use the following measure of distance between $\mu_h$ and its estimator $\hat \mu_h$, to determine how far the latter is from the true expectations:
$$
d_{\kappa}(\mu_h,\hat \mu_{h}) = \frac{|\mu_h - \hat \mu_{h}|}{\mu_h + \hat \mu_{h} + \kappa} .
$$

This measure corresponds to a relative error when approximating $\mu_h$
by $\hat \mu_h$. 
Indeed, let $\eps > 0$ be our approximation ratio, and put $d_{\kappa}(\mu_h,\hat \mu_{h}) < \eps$.
This easily yields
\begin{equation}
  \label{eq:relative_error}
  |\mu_h - \hat \mu_{h}| < \frac{2\eps}{1-\eps} \cdot \mu_h + \frac{\eps}{1-\eps} \cdot \kappa .
\end{equation}
In other words, this implies that $|\mu_h - \hat \mu_{h}| < O(\eps)(\mu_h+ \kappa)$.

Let us fix a parameter $0 < \delta < 1$.  Assume $\C$ is a set of  $\{0,1\}$ valued functions on $\X$ of VC dimension $d$.
\citet{Li00improvedbounds} show that if one samples $$m \defeq c \left(\eps^{-2}\kappa^{-1}(d\log{\kappa^{-1}} + \log{\delta^{-1}}) \right)$$ examples
as above, then~(\ref{eq:relative_error}) holds uniformly for all $h\in \C$  with probability at least $1-\delta$.

We now apply this definition of \emph{relative $\eps$-approximations}, and the corresponding results within our context.
For any $h'$, we define the following four sets of instances:
  \begin{eqnarray*}
    R_{h'}^{++} &=& \{X\in\X:\ h'(X) = Y(X) = 1 \mbox{ and } h(X) = 0\} \\
    R_{h'}^{+-} &=& \{X\in\X:\ h'(X) = 1 \mbox{ and } h(X) = Y(X) = 0\} \\
    R_{h'}^{-+} &=& \{X\in \X:\ h'(X) = 0 \mbox{ and } h(X) = Y(X) =  1\} \\
    R_{h'}^{--} &=& \{X\in \X:\ h'(X) = Y(X) = 0 \mbox{ and } h(X) =   1\}\ . \\
  \end{eqnarray*}
  Observe that the set $\{X\in \X: h(X) \neq h'(X)\}$ equals to the disjoint union of 
  $R_{h'}^{++}$, $R_{h'}^{+-}$, $R_{h'}^{-+}$ and $R_{h'}^{--}$.
  For each $i=0,\dots, L$ and $b\in\{++,+-,-+,--\}$ let $R_{h',i}^{b} = R_{h'}^{b} \cap \X_i$.
  Let $\range^b_i = \{R_{h',i}^b: h' \in \C\}$.
  It is easy to verify that the VC dimension of the range spaces $\left (\X_i, \range^b_i\right)$
  is at most $d$. Each set in $\range^b_i$ is an intersection of a set in $\C^*$ with some fixed set.

  For any $R\subseteq \X_i$ let $\rho_i(R) = \Pr_{X\sim \D_\X|\X_i}[X \in R]$, 
  and $\hat \rho_i(R) = m^{-1}\sum_{j=1}^m \one_{X_{i,j}\in R}$.  
  Note that $\hat \rho_i(R)$ is an unbiased estimator of $\rho_i(R)$.
  
  By the choice of $m$, inequality~(\ref{eq:relative_error}), and the assumptions on $\theta$ and $\nu$ we have that with probability at least $1-\delta/L$, 
  for all $R \subseteq \range^{++}_i\cup \range^{+-}_i\cup \range^{-+}_i\cup \range^{--}_i$,
  
  \begin{equation}
    \label{eq:relative_approx}  
    \left|\rho_i(R) - \hat \rho_i(R) \right| = O(\eps) \cdot \left( \rho_i(R) +\theta^{-1}\right) ,
  \end{equation}
  and by the probability union bound we obtain that this uniformly holds for all $i=0,\dots ,L$
  with probability at least $1-\delta$.
  
  Now fix $h'\in \C$ and let $r=\dist(h,h')$.  Let $r(i) = \lceil \log (r/\mu)\rceil$.  
  By the definition of $\X_i$, $h(X) = h'(X)$ for all $X \in \X_{i}$ whenever $i > r(i)$.
  We can therefore decompose $\reg_h(h')$ as:

  \begin{eqnarray*}
    \reg_h(h') &=& \err_\D(h') - \err_\D(h) \\
    &=& \sum_{i=0}^L \eta_i \cdot \left ( \Pr_{X\sim \D_\X|\X_i}[Y(X) \neq h'(X)] - \Pr_{X\sim \D_\X|\X_i}[Y(X) \neq h(X)]  \right ) \\
    &=& \sum_{i=0}^{r(i)} \eta_i \cdot \left ( \Pr_{X\sim \D_\X|\X_i}[Y(X) \neq h'(X)] - \Pr_{X\sim \D_\X|\X_i}[Y(X) \neq h(X)]  \right ) \\
    &=& \sum_{i=0}^{r(i)} \eta_i \cdot \Big (-\rho_i(R_{h'}^{++}) +\rho_i(R_{h'}^{+-}) + \rho_i(R_{h'}^{-+}) - \rho_i(R_{h'}^{--})\Big)\ .
  \end{eqnarray*}
  On the other hand, we similarly have that
  \begin{eqnarray*}
    f(h') &=&  \sum_{i=0}^{r(i)} \eta_i \cdot \Big  (-\hat \rho_i(R_{h'}^{++}) + \hat \rho_i(R_{h'}^{+-}) + \hat \rho_i(R_{h'}^{-+}) - \hat \rho_i(R_{h'}^{--})\Big)\ .
\end{eqnarray*}
  Combining, we conclude using (\ref{eq:relative_approx}) that
  \begin{eqnarray}\label{almost}
    \left | \reg_h(h') - f(h') \right | \leq  O\left (  \eps \sum_{i=0}^{r(i)} \eta_i \cdot \Big (\rho_i(R_{h'}^{++})+\rho_i(R_{h'}^{+-})+\rho_i(R_{h'}^{-+})+\rho_i(R_{h'}^{--}) + 4\theta^{-1} \Big ) \right )
  \end{eqnarray}
  But now notice that 
  $ \sum_{i=0}^{r(i)} \eta_i \cdot \Big (\rho_i(R_{h'}^{++})+\rho_i(R_{h'}^{+-})+\rho_i(R_{h'}^{-+})+\rho_i(R_{h'}^{--})\Big)$ 
  equals $r$, since it corresponds to those elements $X \in \X$ on
  which $h$, $h'$ disagree. 
  Also note that $\sum_{i=0}^{r(i)}\eta_i$ is at most  $2\max\left\{\Pr_{\D_\X}\left[\dis\left( \B(h, r) \right) \right], \Pr_{\D_\X} \left[ \dis \left(\B(h, \mu)\right)\right]\right\}$.
  By the definition of $\theta$, this implies that the RHS of (\ref{almost}) is bounded by $\eps(r+\mu)$, 
  as required by the definition of $(\eps,\mu)$-SRRA.\footnote{The $O$-notation disappeared because we assume that the
constants are properly chosen in the definition of the sample size $m$.}
\end{proof}



\begin{corollary}\label{srracor}
  \label{maincor}
  An $(\eps,\mu)$-SRRA with respect to $h$ can be constructed, with probability at least $1-\delta$, using at most 
  \begin{equation}\label{srra_from_dis}
  m\left(1 + \left\lceil \log(1/\mu)\right\rceil\right) = 
  O\left (\theta \eps^{-2}\left(\log(1/\mu) \right) \left (d\log \theta 
  + \log(\delta^{-1}\log(1/\mu))\right)\right )\ 
  \end{equation} 
  label queries.
\end{corollary}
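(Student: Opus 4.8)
The plan is to read the query count directly off the construction in Section~\ref{theconstruction} and then invoke Theorem~\ref{mainthm} for correctness; this corollary is essentially an accounting statement riding on top of that theorem. First I would observe that the estimator $f = \sum_{i=0}^L f_i$ is assembled from the samples $X_{i,1},\dots,X_{i,m}$ drawn in each of the sets $\X_0,\X_1,\dots,\X_L$, and that the only labels the construction ever inspects are the values $Y(X_{i,j})$ appearing inside each $f_i$. Hence the total number of label queries is at most the number of sampled points, namely $m$ per level times the number of levels.

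Next I would count the levels. Since $L = \lceil \log \mu^{-1}\rceil$ and the index $i$ ranges over $0,1,\dots,L$, there are exactly $L+1 = 1 + \lceil\log(1/\mu)\rceil$ sampled sets, giving the crude upper bound $m(L+1) = m\left(1+\lceil\log(1/\mu)\rceil\right)$ on the number of label queries. Sampling is performed with repetitions and labels are deterministic functions of $X$, so the number of \emph{distinct} queries can only be smaller; this does not affect the upper bound. I would also note in passing that identifying each $\X_i$ requires only computing $\dis(\B(h,\mu 2^i))$, which depends on $\C$ and the pool $\X$ alone and therefore costs no labels.

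Then I would substitute the value $m = O\!\left(\eps^{-2}\theta\left(d\log\theta + \log(\delta^{-1}\log(1/\mu))\right)\right)$ fixed in the construction and simplify, using $1+\lceil\log(1/\mu)\rceil = O(\log(1/\mu))$, to recover the stated bound $O\!\left(\theta\eps^{-2}(\log(1/\mu))\left(d\log\theta + \log(\delta^{-1}\log(1/\mu))\right)\right)$. Correctness — that this $f$ is in fact an $(\eps,\mu)$-SRRA with respect to $h$ with probability at least $1-\delta$ — is precisely the content of Theorem~\ref{mainthm}, so nothing further is needed on that front.

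I do not expect a genuine obstacle here, since the argument is pure bookkeeping. The single point deserving a moment of care is that the overall failure probability is $\delta$ rather than $\delta$ per level; but this is already handled inside the choice of $m$, whose $\log(\delta^{-1}\log(1/\mu))$ term is calibrated so that the union bound over the $L+1$ levels (carried out in the proof of Theorem~\ref{mainthm}) yields total failure probability at most $\delta$. This is exactly why the $\log(1/\mu)$ factor surfaces twice, once multiplicatively from the level count and once additively inside the logarithm.
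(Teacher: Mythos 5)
Your proposal is correct and matches the paper's (implicit) argument exactly: the corollary carries no separate proof in the paper because it is precisely this bookkeeping — $m$ label queries per level over the $L+1 = 1+\lceil\log(1/\mu)\rceil$ levels of the construction, with the SRRA guarantee and the $\delta$-level union bound (reflected in the $\log(\delta^{-1}\log(1/\mu))$ term of $m$) supplied by Theorem~\ref{mainthm}. Nothing is missing.
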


Combining Corollaries~\ref{c} and \ref{srracor} (Algorithm~\ref{alg:alg}), we obtain an active learning algorithm in the ERM setting, with query
complexity depending on the uniform  disagreement coefficient and the VC dimension.  
Assume $\delta$ is a constant.
If we are interested in excess risk of order at least that of the optimal error $\nu$,
then we may take $\eps$ to be, say, $1/5$ and achieve the sought bound by constructing $(1/5, \nu)$-SRRA's using  $\ O(\theta d (\log(1/\nu))(\log \theta))$,
once for each of $O(\log(1/\nu))$ iterations of Algorithm~\ref{alg:alg}. 
If we seek a solution with error $(1+\eps)\nu$, we would need to construct $(\eps, \nu)$-SRRA's using
$ O(\theta d\eps^{-2}(\log(1/\nu))(\log \theta))$ query labels, one for each of $O(\log(1/\nu))$ iterations of the algorithm.
 The total label query complexity
is $ O(\theta d(\log^2(1/\nu))(\log \theta))$, which is $O(\log(1/\nu))$ times the best known bounds using disagreement coefficient and VC dimension bounds only.

A few more comparison notes are in place.   First, note that in 
known arguments bounding query complexity using the disagreement coefficient, the disagreement coefficient $\theta_{h^*}$ with respect to the optimal hypothesis $h^*$ is used in the analysis, and not the uniform coefficient $\theta$. 
Also note that both in previously known results bounding query complexity using disagreement coefficient and VC dimension bounds
as well as in our result, the slight improvement described in Remark~\ref{remark:dis} applies.
In other words, 
all arguments remain valid if we replace the supremums in  (\ref{eq:deftheta}) and (\ref{eq:defthetauniform}) with $\sup_{r\geq \nu}$.



\section{Application \#1: Learning to Rank from Pairwise Preferences (LRPP)}\label{sec:LRPP}


``Learning to Rank'' takes various forms in theory and practice of learning, as well as in combinatorial optimization.
In all versions, the goal is to order a set $V$ based on constraints.

A large body of learning literature considers the following scenario:  For each $v\in V$ there is  a label on some discrete ordinal scale, and the goal is to learn how to order $V$ so as to respect induced pairwise preferences.  For example, a scale of $\{1,2,3,4,5\}$, as in hotel/restaurant star quality; where, if $u$ has a label of $5$  (``very good'') and $v$ has a label of $1$ (``very bad''), then any ordering that places
$v$ ahead of $u$ is penalized.  Note that even if the labels are noisy, the induced pairwise preferences here are always transitive, hence no combinatorial problem arises. Our work does not deal with this setting.

When the basic unit of information consists of preferences over pairs $u,v\in V$, then the problem becomes
combinatorially interesting.  In case all quadratically many pairwise preferences are given for free, the corresponding optimization problem is known as \emph{Minimum Feedback Arc-Set in Tournaments} (MFAST).\footnote{A maximization version exists as well.}
MFAST is NP-hard \citep{Alon06}. Recently, \citet{Kenyon-Mathieu:2007:RFE:1250790.1250806} show a PTAS for this (passive learning) problem.
Several important recent works address the challenge of approximating the minimum feedback arc-set problem~\citep{Ailon:2008:AII,DBLP:conf/soda/BravermanM08,Coppersmith:2010:OWN:1798596.1798608}. 

Here we consider a query efficient variant of the problem, in which each preference comes with a cost, and the goal is to produce a 
competitive solution while reducing the preference-query overhead. 
Other very recent work consider similar settings \citep{jamiesonN11nips,Ailon11:active}. \citet{jamiesonN11nips} consider a common scenario in which the 
alternatives can be characterized in terms of $d$ real-valued features and the ranking obeys the structure of the Euclidean distances between such embeddings. 
They present an active learning algorithm that requires, using \emph{average case analysis}, as few as $O(d \log n)$ labels in the noiseless case, and  $O(d \log^2 n)$ 
labels under a certain \emph{parametric} noise model. Our work uses worst-case analysis, and assumes an adversarial noise model. In this Section we analyze the pure combinatorial problem (not assuming any feature embeddings). In Section~\ref{sec:additional_results} we tackle the problem with linearly induced permutation over feature space embeddings.

\citet{Ailon11:active} consider the same setting as ours.
Our main result Corollary~\ref{cor:comb} is a slight improvement over the main result of \citet{Ailon11:active} in query complexity,
but it provides another significant improvement.   \citet{Ailon11:active} uses a querying method that is based on a divide and conquer strategy.  The weakness of such a strategy can be explained by considering an example in which we want to search a restricted
set of permutations (e.g., the setting of Section~\ref{sec:geometric}).  When dividing and conquering, the algorithm in \cite{Ailon11:active} is doomed to search  a cartesian product of
two permutations spaces (left and right).  There is no guarantee that there even exists a permutation in the restricted space that 
respects this division.  In our querying algorithm this limitation is lifted.

\subsection{Problem Definition}\label{sec:lrpp:def}

Let $V$ be a set of $n$ elements (alternatives).
The instance space $\X$ is taken to be the set of all distinct pairs of elements in $V$, namely $V\times V\setminus\big\{(u,u): u\in V\big\}$.  The distribution $\D_\X$ is uniform on $\X$.
The label function $Y : \X\mapsto \{0,1\}$ encodes a preference function satisfying $Y\big((u,v)\big) = 1-Y\big((v,u)\big)$ for all $u,v\in V$.\footnote{Note that we could have defined $\X$ to be unordered pairs of elelements in $V$ without making any assumption on $Y$.  We chose this definition for convenience in what follows.}
By convention, we think of $Y\big((u,v)\big)=1$ as a stipulation that $u$ is preferred over $v$. For convenience we will drop
the double-parentheses in what follows.

The class of solution functions $\C$ we consider is all 
$h : \X\rightarrow \{0,1\}$ such that it is \emph{skew-symmetric}: $h(u,v)=1-h(v,u)$, and \emph{transitive}: $h(u,z) \leq h(u,v) + h(v,z)$ for all distinct $u,v,z\in V$.
This is equivalent to the space of permutations over $V$, and we will 
use the notation $\pi, \sigma,\dots$ instead of $h, h', \dots$ in the remainder of the section.
We also use notation $u\prec_{\pi} v$ as a predicate equivalent to $\pi(u,v)=1$.
Endowing $\X$ with the uniform measure,
$\dist(\pi, \sigma)$ 
turns out to be (up to normalization) the well known~\kendall~distance:
	$\dist(\pi,\sigma) = N^{-1} \sum_{u\neq v} \one_{\pi(u,v)\neq \sigma(u,v)}$,
where $N\defeq n(n-1)$ is the number of all ordered pairs.




\subsection{The Weakness of Using Disagreement Coefficient Arguments}
Let us first see if we can get a useful active learning algorithm using disagreement coefficient arguments.
It has been shown in \citet{Ailon11:active} that the uniform disagreement coefficient of $\C$
is $\Omega(n)$. To see this simple fact, notice that if we start from some permutation $\pi$ and swap the 
positions of \emph{any} two elements $u,v\in V$, then we obtain a permutation of distance at most $O(1/n)$ 
away from $\pi$, hence the disagreement region of the ball of radius $O(1/n)$ around $\pi$ is the entire space $\X$.
It is also known that the VC dimension of $\C$ is $n-1$ \citep[see,][]{DBLP:conf/wsdm/RadinskyA11}.  
Using Corollary~\ref{srracor}, we conclude that we would need $\Omega(n^2)$ preference labels to obtain
an $(\eps,\mu)$-SRRA for any meaningful pair $(\eps,\mu)$.  
This is uninformative because the cardinality of $\X$ is $O(n^2)$.  A similar bound is obtained using any known
active learning bound using  disagreement coefficient
and VC-dimension bounds~only.

\begin{remark}
A slight improvement can be obtained using the refined definition of disagreement coefficients of Remark~\ref{remark:dis}.
Observe that the uniform disagreement coefficient, as well as the disagreement coefficient at the optimal solution $h^*$ becomes $\theta = \theta_{h^*} = O(1/\nu)$, if
$\nu\geq \frac{1}{n}$.\footnote{Due to symmetry, the uniform disagreement coefficient here equals $\theta_h$ for any $h\in \C$.} This improves the query complexity bound to $O(n\nu^{-1})$.  If
$\nu$ tends to $n^{-1}$ from above, in the limit this becomes a quadratic (in $n$) query complexity.
\end{remark}
 We next show how to construct more useful (in terms of query complexity) SRRA's for LRPP, \emph{for arbitrarily small $\nu$}.

\subsection{Better SRRA for LRPP}
\label{lrpp}
\def\k{p}
Consider the following idea for creating an $\eps$-SRRA for LRPP, with respect to some fixed $\pi\in \C$.  
We start by defining the following sample size parameter:
\begin{equation}\label{pdef}\k \defeq O\left(\eps^{-3}\log^3 n\right)\ .
\end{equation}  
For all $u\in V$ and for  all $i=0,1,\dots, \lceil \log n\rceil$, let $I_{u,i}$ 
Denote the set of elements $v$ such that $2^i\k \leq |\pi(u)-\pi(v)| < 2^{i+1}\k$ where, abusing notation, $\pi(u)$ is the position
of $u$ in $\pi$. For example, $\pi(u)$ is $1$ if $u$ beats all other elements, and $n$ if it is beaten by all others.
From this set, choose a random sequence  $R_{u,i}=(v_{u,i,1}, v_{u,i,2},\dots, v_{u,i,p})$ of  $p$ elements,  each chosen uniformly and independently from $I_{u,i}$.
\footnote{A variant of this sampling scheme is as follows: For each pair $(u,v)$, add it to $S$ with probability proportional to $\min\{1, \k/|\pi(u)-\pi(v)|\}$.  A similar scheme can be found in \citep{DBLP:journals/rsa/AilonCCL07, DBLP:journals/siamcomp/HalevyK07, Ailon11:active} but the strong properties proven here were not known.}

For distinct $u,v\in V$ and a permutation $\sigma\in \C$, let $C_{u,v}(\sigma)$ denote  the contribution of the pair $u,v$ to $\err_\D(\sigma)$,  namely: 
\begin{equation}\label{eq:Cdef}
C_{u,v}(\sigma) \defeq N^{-1} \one_{\sigma(u,v)\neq Y(u,v)}\ .
\end{equation} (Note that $C_{u,v} \equiv C_{v,u}$.)
Our estimator $f(\sigma)$ of $\reg_\pi(\sigma) = \err_\D(\sigma) - \err_\D(\pi)$ is defined as
\begin{equation}\label{eq:estimator}
f(\sigma) = \sum_{u\in V}\sum_{i=0}^{\lceil \log n \rceil} \frac {|I_{u,i}|}{p} \sum_{t=1}^p \big(C_{u,v_{u,i,t}}(\sigma) - C_{u,v_{u,i,t}}(\pi)\big)\ .\end{equation}
Clearly, $f(\sigma)$ is an unbiased estimator of $\reg_\pi(\sigma)$ for any $\sigma$.  Our goal is to prove that $f(\sigma)$ is an $\eps$-SRRA.

\begin{theorem}
  \label{thm:comb}
  With probability at least $1-n^{-3}$, the function $f$ is an $\eps$-SRRA with~respect~to~$\pi$.
\end{theorem}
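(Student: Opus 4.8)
The plan is to treat $f(\sigma)-\reg_\pi(\sigma)$ as a sum of independent, mean-zero random variables and to control it by a Bernstein-type concentration inequality together with a multi-scale (peeling) union bound over $\C$ that exploits the fact that we only need \emph{relative} accuracy $\eps\cdot\dist(\pi,\sigma)$. Writing $D_{u,v}\defeq C_{u,v}(\sigma)-C_{u,v}(\pi)$, each $D_{u,v}\in\{-N^{-1},0,N^{-1}\}$ and is nonzero exactly on the pairs where $\pi$ and $\sigma$ disagree. Since $f$ is unbiased (as already noted), I would first record that $f(\sigma)-\reg_\pi(\sigma)=\sum_{u,i}\big(\frac{|I_{u,i}|}{p}\sum_{t}D_{u,v_{u,i,t}}-\sum_{v\in I_{u,i}}D_{u,v}\big)$ is, over all triples $(u,i,t)$, a sum of independent mean-zero terms. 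The pairs at $\pi$-position-distance below $p$ carry no sampling error (they may be queried exhaustively within the $\widetilde O(n)$ budget), so it suffices to bound the contribution of the sampled bands $i\ge 0$.

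Next I would estimate the two Bernstein parameters. Each band satisfies $|I_{u,i}|=O(2^i p)$, so a single summand has magnitude $O(2^i/N)=O(1/(pn))$ and the range parameter is $b=O\big(\max_{\text{active }(u,i)}|I_{u,i}|/(pN)\big)$. For the total variance a direct computation gives $V(\sigma)=O\big(\frac{1}{pN^2}\sum_{u,i}|I_{u,i}|\,d_{u,i}(\sigma)\big)$, where $d_{u,i}(\sigma)$ counts the disagreeing pairs $(u,v)$ with $v\in I_{u,i}$; this is precisely the quantity the stratified, band-proportional weights $|I_{u,i}|/p$ were designed to tame. Using $|I_{u,i}|=O(2^i p)=O(D_{u,v})$ for $v$ in band $i$, the variance is governed by the scale-weighted disagreement count, i.e. the sum of $D_{u,v}$ over disagreeing pairs weighted by their $\pi$-distance.

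The crux — and the step I expect to be hardest — is to show that this scale-weighted count is controlled by $r=\dist(\pi,\sigma)$ in the right, $r^2$-producing way, so that Bernstein yields a deviation of order $\eps r$ rather than a fixed additive error. The key structural input is a transitivity (``witness'') property of permutations: if $\pi,\sigma$ disagree on a pair with $u\prec_\pi v$, $v\prec_\sigma u$, and $w$ lies strictly between them, $u\prec_\pi w\prec_\pi v$, then transitivity forces $\pi,\sigma$ to disagree on at least one of $(u,w),(w,v)$. Hence every disagreement at scale $D$ is accompanied by at least $D-1$ disagreements at strictly smaller scales. I would exploit this in two ways: long-range disagreements are \emph{expensive} (they force $r$ to be correspondingly large), which bounds the variance in terms of $r$; and permutations placing many disagreements at large scale are \emph{rare}, which controls the union bound.

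Finally I would assemble the uniform statement by peeling $\C$ into dyadic shells $\{\sigma:\dist(\pi,\sigma)\approx r\}$, refined further by the spread and maximal scale of the disagreements. Within each shell the target accuracy $\eps r$ is fixed, the worst-case variance and range parameters are determined, and the number of permutations in the shell is bounded by standard counts for permutations at a given \kendall~distance (sharpened at large scales by the witness property). Because the required accuracy $\eps r$ grows with the shell radius exactly as the admissible-permutation count grows, Bernstein beats the union bound shell by shell. Summing the $O(\log n)$ shells and demanding failure probability $n^{-3}$ is what forces the sample size $p=O(\eps^{-3}\log^3 n)$: the logarithmic factors absorb the number of scales, the Bernstein range term, and the $n^{-3}$ confidence, while the $\eps^{-3}$ leaves slack for the combinatorial counting. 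The main obstacle throughout is making this multi-scale union bound rigorous — in particular proving, via the witness property, that in every shell simultaneously the variance is small enough and the per-shell permutation count is small enough to close the relative-accuracy argument.
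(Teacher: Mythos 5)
Your overall architecture---unbiasedness, independence over the sampled triples $(u,i,t)$, a per-hypothesis concentration bound, a multi-scale structural fact about inversions, and an accuracy-versus-entropy tradeoff---matches the paper's proof in spirit, and your side remark that pairs at position distance below $p$ can simply be queried exhaustively is correct (and implicit in the paper). But the step you yourself flag as the crux is where the proposal genuinely breaks. A union bound over permutations in dyadic shells $\{\sigma:\ \dist(\pi,\sigma)\approx r\}$, even refined by maximal scale and spread, cannot close with $p=O(\eps^{-3}\log^3 n)$, because within a single shell the worst-case variance and the shell cardinality are driven by \emph{different parts} of the permutation, and their product does not factor. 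Concretely, take $\pi=\mathrm{id}$ and consider all $\sigma$ obtained by composing one swap of two elements at position distance $K/4$ (this alone contributes variance $\Theta\big(K^2/(pN^2)\big)$ to $f(\sigma)-\reg_\pi(\sigma)$, through the high bands of the two swapped elements) with $K/2$ disjoint adjacent transpositions placed elsewhere (these contribute entropy: there are $n^{\Omega(K)}$ choices when $K\leq n/100$, say). All these $\sigma$ lie in one shell of \kendall~distance $\Theta(K)$ with the same maximal scale and spread, and the target accuracy is $\eps r=\Theta(\eps K/N)$. Your recipe---per-$\sigma$ Bernstein at the worst-case in-shell variance, times the shell count---then requires $\eps^2K^2/N^2 \gtrsim \big(K^2/(pN^2)\big)\cdot K\log n$, i.e.\ $p\gtrsim \eps^{-2}K\log n$, which is super-polylogarithmic once $K\gg\log^2 n$. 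The witness property does not rescue this: it does bound the variance by $O(r^2/p)$ and does make permutations with \emph{many} independent large-scale inversions rare, but the bad shells contain one cheap-in-entropy large-scale swap together with many cheap-in-variance small-scale transpositions, and the naive product of the two worst cases is off by a factor of $K$.

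What closes the argument---and this is the paper's actual route---is to decompose the error statistic itself by displacement scale \emph{before} union bounding, so that the piece at scale $M$ depends only on the few coordinates of the displacement profile $\prof(\sigma)$ exceeding $\eps M$ in absolute value (at Spearman Footrule distance $d$ there are at most $d/(\eps M)$ of them, hence only $n^{2d/(\eps M)}$ possibilities), and then to union bound each piece over these coarse profiles rather than over permutations. This is enabled by a charging scheme slightly different from your witness property: an inversion of $(u,v)$ at position distance $b$ forces one \emph{endpoint} to have displacement at least $b/2$, so inversions can be charged to high-displacement elements and the scale-$M$ piece becomes a function of the coarse profile; the per-scale accuracy $\eps d/(N\log n)$ then beats the $n^{2d/(\eps M)}$ count exactly when $p=\Omega(\eps^{-3}\log^3 n)$, using $JM=O(d)$. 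In your scheme the fluctuation coming from the big swap is shared by all $n^{\Omega(K)}$ permutations in the shell, and only a decomposition that union bounds it once---over the $O(n)$ choices of the swap, not over the whole shell---can exploit that correlation. A second, smaller omission: inversions both of whose endpoints have small displacement are \emph{not} determined by the coarse profile; the paper needs a separate margin argument (its split of $B$ into $B^1$ and $B^2$, with a deterministic bound under a ``successful sampling'' event) to handle them, and your proposal has no counterpart for this boundary term.
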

\begin{proof} The main idea is to \emph{decompose} the difference $|f(\sigma) - \reg_\pi(\sigma)|$ vis-a-vis corresponding pieces
of  $\dist(\sigma, \pi)$. The first half of the proof is devoted to definition of such distance ``pieces.'' Then using counting and 
standard deviation-bound arguments we show that the decomposition is, with high probability, an $\eps$-SRRA. 

We start with a few definitions. 
Abusing notation, for any $\pi\in \C$ and $u\in V$ let $\pi(u)$ denote the position of $u$ in the unique permutation
that $\pi$ defines. For example, $\pi(u)=1$ if $u$ beats all other alternatives: $\pi(u,v)=1$ for all $v\neq u$; Similarly $\pi(u)=n$
if $u$ is beaten by all other~alternatives.
For any permutation $\sigma\in \C$, we define the corresponding \emph{profile} of $\sigma$ as the
vector:\footnote{For the sake of definition assume an arbitrary
  indexing such that $V = \big\{ u_i : i=1,\ldots,n\big\}$.}
$$\prof(\sigma) = \big( \sigma(u_1) - \pi(u_1), \sigma(u_2) - \pi(u_2), \dots, \sigma(u_n) - \pi(u_n)  \big).$$
Note that $\norm{ \prof(\sigma) }_1$ is the well-known \emph{Spearman Footrule} distance between $\sigma$ and $\pi$, which we denote by $\dsf(\sigma)$ for brevity.
For a subset $V'$ of $V$, we let
$\prof(\sigma)[V']$ denote the restriction of the vector $\prof(\sigma)$ to $V'$. 
Namely, the vector obtained by zeroing in $\prof(\sigma)$ all coordinates $v\not \in V'$.

Now fix $\sigma\in \C$ and two distinct $u,v\in V$.  Assume $u,v$ is an inversion in $\sigma$ with respect to $\pi$,
and that $|\pi(u)-\pi(v)| = b$ for some integer $b$.  Then either $|\pi(u)-\sigma(u)| \geq b/2$ or $|\pi(v)-\sigma(v)|\geq b/2$.
We will ``charge'' the inversion to $\argmax_{ z \in  \{u,v\}} \big\{|\pi(z)-\sigma(z)| \big\}$.\footnote{Breaking ties using some canonical rule, for example, charge to the greater of $u,v$ viewed as integers.} 
For any $u\in V$, let $\charge_\sigma(u)$ denote
the set of elements $v\in V$ such that $(u,v)$ is an inversion in $\sigma$ with respect to $\pi$, which is charged to $u$ based
on the above rule. 
The function $\reg_{\pi}(\sigma)$ can now be written as 
\begin{equation}\label{eq:deff}
\reg_{\pi} (\sigma) = 2\sum_{u\in V}\sum _{v\in \charge_\sigma(u)} \big(C_{u,v}(\sigma) - C_{u,v}(\pi)\big)\ ,	
\end{equation}
where we recall the definition of $C_{u,v}$ in Equation~(\ref{eq:Cdef}).
Indeed, any pair that is not inverted contributes nothing to the difference.  From now on, we shall remove the subscript $\pi$, because it is held
fixed.
Similarly, our estimator $f(\sigma)$ can be written as
$$f (\sigma) = 2\sum_{u\in U}\sum_{i=0}^{\lceil \log n \rceil} 
		\,\frac{|I_{u,i}|}{p} \, \sum_{t=1}^p\big(C_{u,v}(\sigma) - C_{u,v}(\pi)\big) \one_{v_{u,i,t} \in \charge_\sigma(u)}\ .$$
For any even integer $M$ 
let $U_{\sigma, M}$ denote the set of all elements $u\in V$ such
that $$M/2 < |\pi(u)-\sigma(u)| \leq M\ .$$  Let $U_{\sigma, \leq M}$ denote: $$ \bigcup_{M'\leq M} U_{\sigma, M'}\ .$$

\noindent
Now consider the following restrictions of  $\reg(\sigma)$ and $f(\sigma)$:
\begin{eqnarray}
\reg(\sigma, M) &\defeq& 2\sum_{u\in U_{\sigma, M}}\,\sum _{v\in \charge_\sigma(u)} \big( C_{u,v}(\sigma) - C_{u,v}(\pi) \big) \label{count} \\
f(\sigma, M) &\defeq& 2\sum_{u\in U_{\sigma, M}}\sum_{i=0}^{\lceil \log n \rceil}\sum_{t=1}^p \,\frac {|I_{u,i}|}{p} \,
	 \big ( C_{u,v}(\sigma) - C_{u,v}(\pi) \big)\one_{v_{u,i,t}\in \charge_\sigma(u)} \label{countA}
\end{eqnarray}
Clearly, $f(\sigma, M)$ is an unbiased estimator of $\reg(\sigma, M)$.  
Let $T_{\sigma, M}$ denote the set of all elements $u\in V$ such that $|\pi(u)-\sigma(u)| \leq \eps M$.  
We further split the expressions in (\ref{count})-(\ref{countA}) as follows:
\begin{align}\label{count1}
\reg(\sigma,M)\defeq A(\sigma, M) + B(\sigma, M) &\text{, and   } f(\sigma,M) \defeq \hat A(\sigma, M) + \hat B(\sigma, M),
\end{align}
where,
\begin{eqnarray}\label{count1b}
A(\sigma, M) &\defeq&  2\sum_{u\in U_{\sigma, M}}\,\sum _{v\in \charge_\sigma(u)\cap \overline{T_{\sigma, M}}} \big( C_{u,v}(\sigma) - C_{u,v}(\pi) \big) \\  
\hat A(\sigma, M) &\defeq&  2\sum_{u\in U_{\sigma, M}}\,\sum_{i=0}^{\lceil \log n \rceil}\, \frac {|I_{u,i}|}{p} \,\sum_{t=1}^p  \big ( C_{u,v}(\sigma) - C_{u,v}(\pi) \big)\one_{v_{u,i,t}\in \charge_\sigma(u)\cap \overline{T_{\sigma, M}} }
\end{eqnarray}
$\overline{(\cdot)}$ is set complement in $V$, and $B(\sigma, M),\hat B(\sigma, M)$ are analogous with $T_{\sigma, M}$ instead of $\overline{T_{\sigma, M}}$, as follows:

\begin{eqnarray}\label{count1c}
B(\sigma, M) &\defeq&  2\sum_{u\in U_{\sigma, M}}\,\sum _{v\in \charge_\sigma(u)\cap {T_{\sigma, M}}} \big( C_{u,v}(\sigma) - C_{u,v}(\pi) \big) \\  
\hat B(\sigma, M) &\defeq&  2\sum_{u\in U_{\sigma, M}}\,\sum_{i=0}^{\lceil \log n \rceil}\, \frac {|I_{u,i}|}{p} \,\sum_{t=1}^p \big ( C_{u,v}(\sigma) - C_{u,v}(\pi) \big)
\one_{v_{u,i,t}\in \charge_\sigma(u)\cap{T_{\sigma, M}} }
\end{eqnarray}

We now estimate the deviation of $\hat A(\sigma, M)$ from $A(\sigma, M)$. 
Fix $M$.  
 Notice that 
the expression $A(\sigma, M)$ is completely determined by non-zero elements of the vector $\prof(\sigma)[U_{\sigma, \leq M} \cap \overline{T_{\sigma, M}}]$.  
Let $J_{\sigma, M}$ denote the number of nonzeros in $\prof(\sigma)[U_{\sigma, M}]$. 
Each nonzero coordinate of $\prof(\sigma)[U_{\sigma, \leq M} \cap \overline{T_{\sigma, M}}]$ is bounded below by $\eps M$ in absolute value by definition. 
Let $P(d,M)$ denote the number of possibilities for the vector
$\prof(\sigma)[ \overline{ T_{\sigma, M}}]$  
for $\sigma$ running over all permutations satisfying $\dsf(\sigma) = d$.
We claim that
\begin{equation}\label{boundP} P(d,M) \leq  n^{2d/(\eps M)}\ .\end{equation}  Indeed, there
can be at most $d/(\eps M)$ nonzeros in $\prof(\sigma)[ \overline{ T_{\sigma, M}}]$, and each nonzero coordinate can  trivially take at most $n$ values.  The bound follows.

Now fix integers $d$ and $J$, and consider the subspace of  permutations $\sigma$ such that $J_{\sigma, M} = J$ and $\dsf(\sigma) = d$.
Define for each $u\in U_{\sigma, M}$, $i\in \lceil \log n\rceil$
and $t=1,\dots, p$ a random variable $X_{u,i,t}$ as follows
$$ X_{u,i,t} = \frac {|I_{u,i}|} p \big ( C_{u,v}(\sigma) - C_{u,v}(\pi) \big)\one_{v_{u,i,t}\in \charge_\sigma(u)\cap \overline{T_{\sigma, M}} }\ .$$
Clearly $\hat A(\sigma, M) = 2\sum_{u \in U_{\sigma, M}} X_{u,i,t}\ .$
For any $u\in V$,  let $i_u = \argmax_i |I_{u,i}|  \leq 4M $, and observe that, by our charging scheme,
$X_{u,i,t} = 0$ almost surely, for all $i>i_u$ and $t=1\dots p$.  
Also observe that for all $u,i,t$, $|X_{u,i,t}| \leq 2N^{-1}|I_{u,i}|/p \leq 2^{i+1}/p$ almost surely.  For a random variable $X$, we denote by $\|X\|_\infty$ the infimum over numbers $\alpha$ such that $X \leq \alpha$ almost surely.
We conclude:
$$ \sum_{u\in U_{\sigma, M}} \sum_{i=0}^{i_u}\sum_{t=1}^p \|X_{u,i,t}\|_\infty^2 \leq \sum_{u\in U_{\sigma, M}}\sum_{i=0}^{i_u} N^{-2}p 2^{2i+2}/p^2 \leq c_2p^{-1}N^{-2} J M^2 $$
for some global $c_2>0$. (We used a bound on the sum of a geometric series.)
 Using Hoeffding bound, we conclude that
with the probability that $\hat A(\sigma, M)$ deviates from its
expected value of $A(\sigma, M)$ by more than some $s>0$ is at most $\exp\{-s^2p/(2c_2JM^2N^{-2})\}$.
We conclude that the probability that $A(\sigma, M)$ deviates
from its expected value by more than $\eps d/(N\log n)$
is at most $\exp\{-c_1 \eps^2 d^2 p/(JM^2\log^2 n) \}$,
for some global $c_1>0$.  Hence, by taking $p = O(\eps^{-3}d^{-1}MJ\log^3 n)$,
by union bounding over all  $P(d,M)$ possibilities
for $\prof(\sigma)[\overline {T_{\sigma, M}}]$,
with probability at least $1-n^{-7}$ simultaneously
for all $\sigma$ satisfying $J_{\sigma, M}=J$ and $\dsf(\sigma)=d$, 
\begin{equation}\label{boundA} |A(\sigma, M) - \hat A(\sigma, M)| \leq   \eps d/(N\log n)\ .\end{equation}

But note that, trivially, $JM\leq d$, hence  our choice of $p$ in (\ref{pdef}) is satisfactory.    Finally, union bound over
the $O(n^3\log n)$ possibilities for the values of $J$ and $d$ and $M=1,2,4,..$ to conclude that (\ref{boundA}) holds for all permutations $\sigma$ simultaneously, with probability at least $1-n^{-3}$.

Consider now $\hat B(\sigma, M)$ and $B(\sigma, M)$.
We will need to further decompose these two expressions
as follows.  For $u\in U_{\sigma, M}$,we define a disjoint cover $(T^1_{u,\sigma, M}, T^2_{u, \sigma, M})$ of $\charge_\sigma(u)\cap T_{\sigma, M}$ as follows.
If $\pi(u) < \sigma(u)$, then 
$$ T^1_{u,\sigma, M} \defeq \{v\in T_{\sigma, M}: \pi(u) + \eps M < \pi(v) < \sigma(u) - \eps M\}\ .$$
Otherwise,
$$ T^1_{u,\sigma, M} \defeq \{v\in T_{\sigma, M}: \sigma(u) + \eps M < \pi(v) < \pi(u) - \eps M\}\ .$$
Note that by definition, $T^1_{u,\sigma, M} \subseteq \charge_\sigma(u)$.
The set $T^2_{u,\sigma, M}$ is thus taken to be 
$$ T^2_{u, \sigma, M} \defeq (\charge_\sigma(u)\cap T_{\sigma, M}) \setminus T^1_{u,\sigma, M}\ .$$

The expressions $B(\sigma, M), \hat B(\sigma, M)$  now decompose as $B^1(\sigma, M) + B^2(\sigma, M)$ and $\hat B^1(\sigma, M) + \hat B^2(\sigma, M)$, respectively, as follows:
\begin{eqnarray}\label{count1c}
B^1(\sigma, M) &\defeq&  2\sum_{u\in U_{\sigma, M}}\,\sum _{v\in T^1_{u,\sigma, M}} \big( C_{u,v}(\sigma) - C_{u,v}(\pi) \big) \\  
B^2(\sigma, M) &\defeq&  2\sum_{u\in U_{\sigma, M}}\,\sum _{v\in T^2_{u,\sigma, M}} \big( C_{u,v}(\sigma) - C_{u,v}(\pi) \big) \\  
\hat B^1(\sigma, M) &\defeq&  2\sum_{u\in U_{\sigma, M}}\,\sum_{i=0}^{\lceil \log n \rceil}\, \frac {|I_{u,i}|}{p} \,\sum_{t=1}^p \big ( C_{u,v}(\sigma) - C_{u,v}(\pi) \big)
\one_{v_{u,i,t}\in T^1(u,\sigma, M) } \\
\hat B^2(\sigma, M) &\defeq&  2\sum_{u\in U_{\sigma, M}}\,\sum_{i=0}^{\lceil \log n \rceil}\, \frac {|I_{u,i}|}{p} \,\sum_{t=1}^p \big ( C_{u,v}(\sigma) - C_{u,v}(\pi) \big)
\one_{v_{u,i,t}\in T^2(u,\sigma, M) } \ .
\end{eqnarray}

Now notice that $B^1(\sigma, M)$ 
can be uniquely determined  from  $\prof(\sigma)[\overline {T_{\sigma, M}}]$.  Indeed, in order to identify $T^1_{u,\sigma, M}$ for some $u\in U_{\sigma, M}$, it suffices to identify
zeros in a subset of coordinates of $\prof(\sigma)[\overline {T_{\sigma, M}}]$, where the subset depends only on $\prof(\sigma)[\{u\}]$.  Additionally, the value of $C_{u,v}(\sigma) - C_{u,v}(\pi)$ can be ``read'' from $\prof(\sigma)[\overline{T_{\sigma, M}}]$ (and, of course, $Y(u,v)$) if $v\in T^1_{u, \sigma, M}$.
  Hence, a Hoeffding bound and a union bound similar to the one used for bounding $|\hat A(\sigma,M)-A(\sigma, M)|$
can be used to bound (with high probability) $|\hat B^1(\sigma, M) - B^1(\sigma,M)|$ uniformly for all $\sigma$ and $M=1,2,4,...$, as well.

\noindent
Bounding $|\hat B^2(\sigma, M) - B^2(\sigma, M)|$
can be done using the following simple claim.
\begin{claim}
For $u\in V$ and an integer $q$, we say that the sampling is successful at $(u,q)$ if the random variable 
$$\left |\left \{(i, t): \pi(v_{u,i,t}) \in [\pi(u) + (1-\eps) q, \pi(u) + (1+\eps) q] \cup   [\pi(u) - (1+\eps) q, \pi(u) - (1-\eps) q]\right \}\right |$$
is at most twice its expected value.  We say that the
sampling is successful if it is successful at all $u\in V$ and $q\leq n$.  If the sampling is successful, then uniformly for all $\sigma$ and all $M=1,2,4,...$, 
$$ |\hat B^2(\sigma, M) - B^2(\sigma, M) | = O(\eps J_{\sigma,M}M/N ) .$$
The sampling is successful with probability at least $1-n^{-3}$ if
$p=O(\eps^{-1}\log n)$.
\end{claim}
The last assertion in the claim follows from Chernoff bounds.
Note that our bound (\ref{pdef}) on $p$ is satisfactory, in virtue of the claim.

Summing up the errors $|\hat A(\sigma, M)  - A(\sigma, M)|$, $|\hat B(\sigma, M) - B(\sigma, M)|$ over all $M$ gives us the following assertion:  With probability at least $1-n^{-2}$, uniformly for all $\sigma$,
$$ |f(\sigma) - \reg_\pi(\sigma)| \leq \eps \dsf(\sigma, \pi)/(2N)\ ,$$
where $\dsf(\sigma, \pi)$ is the Spearman Footrule distance 
between $\sigma$ and $\pi$.   By \citet{DiaconisG77},
$\dist(\sigma, \pi)$ is at most twice $\dsf(\sigma, \pi)/N$.
This concludes the proof.
\end{proof}

Note that by the choice of $p$ in Equation~(\ref{pdef}), the number of preference queries required for computing $f$ is $O(\eps^{-3}n\log^3 n)$.  
We conclude from Theorem~\ref{thm:comb}, our bound on the number of preference queries, and Algorithm~\ref{alg:alg} defined in Corollary~\ref{c}, the following:
\begin{corollary}
  \label{cor:comb}
  There exists an active learning 
  algorithm for obtaining a solution $\pi\in \C$ for LRPP with
  $\err_\D(\pi)\leq \left(1+O(\eps)  \right)\nu$
  with total query complexity of $O\left(\eps^{-3} n \log^4 n\right)$.  
  The algorithm succeeds with  probability at least $1-n^{-2}$.
\end{corollary}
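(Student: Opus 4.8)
The plan is to run Algorithm~\ref{alg:alg} with the direct $\eps$-SRRA construction of Theorem~\ref{thm:comb} plugged in at each iteration, and then to bound separately the number of iterations $T$ needed for convergence and the query cost paid per iteration. The guiding observation is that the LRPP instance is pool based: $\X$ is the set of $N = n(n-1)$ ordered pairs and $\D_\X$ is uniform, so every value of $\err_\D$ is an integer multiple of $1/N$, and in particular $\nu \geq 1/N$ whenever $\nu > 0$. This discreteness is precisely what allows us to work with an honest $\eps$-SRRA (i.e.\ $\mu=0$, exactly what Theorem~\ref{thm:comb} delivers) and still hit the target accuracy.

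First I would fix an arbitrary initial permutation $h_0$ (say the identity), so that $\err_\D(h_0) \leq 1$, and feed the sequence $h_0,h_1,\dots,h_T$ produced by the algorithm into Corollary~\ref{c} with $\mu=0$. This gives
\[
\err_\D(h_T) \leq (1+O(\eps))\nu + O(\eps^T)\,\err_\D(h_0) \leq (1+O(\eps))\nu + O(\eps^T).
\]
Second, I would choose $T$ so that the residual term is absorbed; it suffices to force $O(\eps^T) \leq \eps/N$. Since $\eps < 1/5$ makes $\log(1/\eps) \geq \log 5$ bounded away from $0$, and $N \leq n^2$, the requirement $\eps^{T-1} \leq 1/(cN)$ is met by $T = O(\log n)$. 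With this choice, if $\nu>0$ then $\eps/N \leq \eps\nu$ and the residual folds into the $(1+O(\eps))\nu$ term; if $\nu=0$ then $\err_\D(h_T) \leq \eps/N < 1/N$, whence by the discreteness of $\err_\D$ we get $\err_\D(h_T)=0=(1+O(\eps))\nu$. In both cases $\err_\D(h_T) \leq (1+O(\eps))\nu$, as desired.

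Third, I would bundle up the probability-of-success and query-complexity accounting. Each of the $T$ SRRA constructions is valid with probability at least $1-n^{-3}$ by Theorem~\ref{thm:comb}, so a union bound over the $T=O(\log n)$ iterations yields overall success probability at least $1-O(n^{-3}\log n) \geq 1-n^{-2}$ for $n$ large. For the queries, each iteration samples $O(\eps^{-3}n\log^3 n)$ preferences (the bound noted after Theorem~\ref{thm:comb}, coming from the value of $p$ in~(\ref{pdef})), so the total over all iterations is $T\cdot O(\eps^{-3}n\log^3 n) = O(\eps^{-3}n\log^4 n)$.

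The only genuinely delicate point is the convergence bookkeeping when $\nu$ is tiny or zero: the additive residual $O(\eps^T)\err_\D(h_0)$ does not automatically shrink relative to $\nu$, so one cannot iterate only a constant number of times and hope the multiplicative guarantee survives. The remedy is the quantization argument above — $\nu \geq 1/N$ when positive and $\err_\D$ lives on a grid of spacing $1/N$ — which caps the number of iterations at $O(\log n)$ \emph{uniformly} in $\nu$. It is exactly this capped iteration count that converts the per-iteration $\log^3 n$ into the $\log^4 n$ appearing in the final bound.
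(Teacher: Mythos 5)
Your proposal is correct and follows essentially the same route as the paper: plug the $\eps$-SRRA of Theorem~\ref{thm:comb} (cost $O(\eps^{-3}n\log^3 n)$ queries per construction) into Algorithm~\ref{alg:alg} via Corollary~\ref{c} with $\mu=0$, run $T=O(\log n)$ iterations, and union bound the per-iteration failure probability $n^{-3}$ to get $1-n^{-2}$, which yields the stated $O(\eps^{-3}n\log^4 n)$ total. Your explicit quantization argument (error values live on a grid of spacing $1/N$, so $\nu\geq 1/N$ when positive) is exactly the justification the paper leaves implicit in its remark that $\mu=O(1/N)$ is tantamount to $\mu=0$ in the pool-based setting.
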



Corollary~\ref{cor:comb} allows us to find a solution of cost  $(1+\eps)\nu$  with query complexity that is
slightly above linear in $n$ (for constant $\eps$), regardless of the magnitude of $\nu$.  In comparison, as we saw in Section~\ref{sec:LRPP}, known active learning results (and in particular Corollary~\ref{srracor})  that used  disagreement coefficient and VC dimension bounds only
guaranteed a query complexity of $\Omega(n\nu^{-1})$, tending to 
the pool size of $n(n-1)$ as $\nu$ becomes small.  Note that  $\nu=o(1)$ is quite realistic for this problem.
For example,  consider the following noise model.  A ground truth permutation $\pi^*$ exists,  $Y(u,v)$ is obtained as a human response to the question of preference between $u$ and $v$ with respect to $\pi^*$, and the human errs with probability proportional to $|\pi^*(u) - \pi^*(v)|^{-\rho}$. Namely, closer pairs of  item in the ground truth permutation are more prone to confuse a human labeler.  The resulting noise is $\nu = n^{-\rho}$ for some $\rho>0$.\footnote{Our work does not assume Bayesian noise, and we present this scenario for illustration purposes only.}


\section{Application \#2: Clustering with Side Information}\label{clust_with_side}
Clustering with side information is a fairly new variant of clustering first described, independently, by
\citet{Demiriz99semi-supervisedclustering}, and \citet{Ben-DorSY99}. In the machine learning community it is also widely known as \emph{semi-supervised clustering}.
There are a few alternatives for the form of feedback
providing the side-information. The most natural ones are the single item labels \citep[e.g.,][]{Demiriz99semi-supervisedclustering}, and the pairwise constraints 
\citep[e.g.,][]{Ben-DorSY99}. 

Here we consider pairwise side information: ``must''/``cannot'' link for pairs of elements $u,v\in V$.  Each such information bit comes at a cost, and must be treated frugaly.
In a combinatorial optimization theoretical setting known as \emph{correlation clustering} there is no input cost overhead, and similarity information for all (quadratically many) pairs is available.  The goal there is to optimally clean
the noise (nontransitivity).
Correlation clustering was defined in  \citep{BBC04}, and also  in \citep{Shamir:2004:CGM} under the name
\emph{cluster editing}.  Constant factor approximations are known for various minimization versions of this problems \citep{CharikarW04, Ailon:2008:AII}.  
A PTAS is known for a minimization version in which the number of clusters is fixed to be $k$ \citep{GiotisGuruswami06},
as in our setting.

In machine learning, there are two main approches for utilizing pairwise side information.
In the first approach, this information is used to fine tune or learn
a \emph{distance} function, which is then passed on to any standard clustering algorithm such as $k$-means or $k$-medians \citep[see, e.g.,][]{Klein02frominstance-level,Xing02distancemetric,Cohn03semi-supervisedclustering,ShamirT11,balcan12:active}.
The second approach, which is more related to our work, modifies  the clustering algorithms's objective
so as to incorporate the pairwise constraints \citep[see, e.g.,][]{basu05,ErikssonDSN11}. \citet{basu05} in his thesis, which also serves as a comprehensive survey, has championed this approach in conjunction with $k$-means,
and hidden Markov random field clustering algorithms.
In our work we isolate the use of information coming from pairwise clustering constraints, and separate it from the
 geometry of the problem.  In future work it would be interesting to analyze our framework in conjunction with the geometric
structure of the input.  Interestingly \cite{ErikssonDSN11} studies active learning for clustering using the geometric
input structure.  Unlike our setting, they assume either no noise or bayesian noise.

\subsection{Problem Definition}

Let $V$ be a set of points of size $n$.  Our goal now is to partition $V$ into $k$ sets (clusters), where $k$ is fixed.
In most applications, $V$ is endowed with some metric, and the practitioner uses this metric in order
to evaluate the quality of a clustering solution.
In some cases, known as \emph{semi-supervised clustering}, or \emph{clustering with side information},
additional information comes in the form of \emph{pairwise constraints}.  Such a constraint tells us
for a pair $u,v\in V$ whether they should be in the same cluster or in separate ones.  We concentrate
on using such information.

Using the notation of our framework, $\X$ denotes the set of distinct pairs of elements in $V$ (same as in Section~\ref{sec:LRPP}), and $\D_\X$ is the corresponding uniform measure.  The label $Y\big((u,v)\big)=1$ means that
$u$ and $v$ should be clustered together, and $Y\big((u,v)\big)=0$ means the opposite.   
Assume that $Y\big((u,v)\big)=Y\big((v,u)\big)$ for all $u,v$.\footnote{Equivalently, assume that $\X$ contains only unordered distinct pairs without any constraint on $Y$.  For notational purposes we preferred to define $\X$ as the set of ordered distinct pairs.}

The concept class $\C$ is the set of equivalence relations  over $V$ with at most $k$ equivalence classes.  More precisely,
Every $h\in \C$ is identified with a  disjoint cover $V_1,\dots, V_k$ of $V$ (some $V_i$'s possible empty), with $h\big((u,v)\big)=1$ if and only
if $u,v\in V_j$ for some $j$.
As usual, $Y$ may induce a non-transitive
relation (e.g.,  we could have $Y\big((u,v)\big)=Y\big((v,z)\big)=1$ and $Y\big((u,z)\big)=0$).
In what follows, we will drop the double parentheses.  Also, we will abuse notation by viewing $h$ as both an equivalence
relation and as a disjoint cover $\{V_1,\dots, V_k\}$ of $V$.
We take $\D$ to be the uniform measure on $\X$.
The error 
of $h\in \C$ is given as
$ \err_\D(h) = N^{-1}\sum_{(u,v)\in \X} \one_{h(u,v) \neq Y(u,v)}$
where, as before, $N = |\X| = n(n-1)$. We will define $\cost_{u,v}(h)$ to be the contribution  $N^{-1}\one_{h(u,v) \neq Y(u,v)}$
of $(u,v)\in \X$ to $\err_\D$.
 The distance $\dist(h,h')$ is given as
$ \dist(h,h') = N^{-1} \sum_{(u,v)\in \X} \one_{h(u,v)\neq h'(u,v)}$.


\subsection{The Ineffectiveness of Using Disagreement Coefficient Arguments}
We check again what  disagreement coefficient based arguments can contribute to this problem.
It is easy to see that the uniform disagreement coefficient of $\C$ is $\Theta(n)$.
Indeed, starting from any solution $h\in \C$ with corresponding partitioning $\{V_1,\dots, V_k\}$, consider
the partition obtained by moving an element $u\in V$ from its current part $V_j$ to some other part $V_{j'}$ for $j'\neq j$.
In other words, consider the clustering $h'\in \C$ given by $\left\{V_{j'}\cup \{u\}, V_j\setminus\{u\}\right\}\cup \bigcup_{i\not \in \{j,j'\}} \left\{V_i\right\}$.  
Observe that $\dist(h,h') = O(1/n)$. On the other hand, for any $v\in V$ and for any $u\in V$ there is a choice of $j'$ so that 
$h$ and $h'$ obtained as above would disagree on $(u,v)\in \X$.  Hence, $\Pr_{\D_\X} \left[  \dis\left(\B\left(h, O(1/n) \right) \right) \right]~=~1$.  

It is also not hard to see that the VC dimension of $\C$ is $\Theta(n)$.  Indeed, any full matching over $V$ constitutes
a set which is shattered in $\C$ (as long as $k\geq 2$, of course).  On the other hand, any set $S\subseteq \X$ of size $n$
must induce an undirected cycle on the elements of $V$.  Clearly the edges of a cycle cannot be shattered by functions in  $\C$, because if $h(u_1,u_2) = h(u_2,u_3) =\cdots = h(u_{\ell-1}, u_\ell) = 1$ for $h\in \C$, then also $h(u_1,u_\ell)=1$.

Using Corollary~\ref{srracor}, we conclude that we'd need $\Omega(n^2)$  preference labels to obtain
an $(\eps,\mu)$-SRRA for any meaningful pair $(\eps,\mu)$.  This is useless because the cardinality of $\X$ is $O(n^2)$.
As in the problem discussed in Section~\ref{sec:LRPP}, this can be improved using Remark~\ref{sec:LRPP} to $\Omega(n\nu^{-1})$,
which tends to quadratic in $n$ as $\nu$ becomes smaller.
We next show how to construct more useful  SRRA's for the problem, for arbitrarily small $\nu$.

\subsection{Better SRRA for Semi-Supervised $k$-Clustering}\label{sec:srra_for_cc}

Fix  $h\in \C$, with $h=\{V_1,\dots, V_k\}$  (we allow empty $V_i$'s).   Order the $V_i$'s so that $|V_1| \geq |V_2| \geq \cdots \geq |V_k|$.   We construct an $\eps$-SRRA with respect to $h$ as follows.
For each cluster $V_i\in h$ and for each element $u\in V_i$ we draw $k-i+1$ independent samples $S_{ui},S_{u(i+1)}, \dots, S_{uk}$ as follows.
Each sample $S_{uj}$ is a subset of $V_j$ of size $q$ (to be defined below), chosen uniformly with repetitions from $V_j$,
where
\begin{equation}\label{eq:qdef} q = c_2\max \left\{\eps^{-2}k^2, \eps^{-3} k \right\}\log n\  \end{equation} 
for some global $c_2>0$.
Note that the collection of pairs $\{(u,v) \in \X: v\in S_{ui}\mbox{ for some }i\}$ is, roughly speaking, biased in such a way
that pairs containing elements in smaller clusters (with respect to $h$) are more likely to be selected.

\noindent
We define our estimator $f$ to be, for any $h'\in \C$,
\begin{equation}\label{srraclust}
  f(h') =   \sum_{i=1}^k\frac {|V_i|}q \sum_{u\in V_i}\sum_{v\in S_{ui}} f_{u,v}(h') + 2\sum_{i=1}^k\sum_{u\in V_i}\sum_{j=i+1}^k \frac {|V_j|} q\sum_{v\in S_{uj}} f_{u,v}(h')\ ,
\end{equation}
where $ f_{u,v}(h') \defeq \cost_{u,v}(h') - \cost_{u,v}(h)$ and $\cost_{u,v}(h) \defeq N^{-1}\one_{h(u,v)\neq Y(u,v)}$.
Note that the summations over $S_{ui}$ above takes into account multiplicity of elements in the multiset $S_{ui}$.

\begin{theorem}\label{thm:kclust}
With probability at least $1-n^{-3}$  the function $f$ is an $\eps$-SRRA with respect to $h$.
\end{theorem}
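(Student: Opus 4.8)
The plan is to follow the template of the proof of Theorem~\ref{thm:comb}: first verify that $f$ is unbiased for $\reg_h$, then control the deviation $f(h')-\reg_h(h')$ uniformly over $\C$ by writing it as a sum of independent pieces indexed by sampled pairs, bounding the variance of each piece against the corresponding piece of $\dist(h,h')$, and union-bounding over a combinatorial encoding of $h'$.

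First I would verify unbiasedness. Writing $\reg_h(h')=\sum_{(u,v)\in\X}\big(\cost_{u,v}(h')-\cost_{u,v}(h)\big)=\sum_{(u,v)\in\X}f_{u,v}(h')$ and splitting $\X$ into within-cluster pairs (both endpoints in one $V_i$) and cross-cluster pairs ($u\in V_i$, $v\in V_j$, $i\neq j$), the within part matches the first sum of (\ref{srraclust}) in expectation, since each multiset $S_{ui}$ averages $f_{u,\cdot}$ over $V_i$ and is reweighted by $|V_i|/q$; the cross part matches the second sum, the factor $2$ accounting for the two ordered pairs $(u,v),(v,u)$ and each cross pair being sampled from the larger-index, hence smaller, cluster $V_j$ with $j>i$. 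Hence $E[f(h')]=\reg_h(h')$ for every fixed $h'$. Writing $c(u)$ for the index with $u\in V_{c(u)}$, I would then express $f(h')-\reg_h(h')$ as a sum of independent, zero-mean contributions built from sampled summands, each of absolute value $O(|V_j|/(qN))$, grouped by source element $u$ and target cluster $j\geq c(u)$.

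The heart of the proof is uniform concentration over all $h'\in\C$, which no naive union bound can deliver since $|\C|$ is super-polynomial. Mirroring the displacement-scale decomposition of Theorem~\ref{thm:comb}, I would measure for each element $u$ the number $d_u=|\{v:h(u,v)\neq h'(u,v)\}|$ of disagreeing pairs it touches, so that $\sum_u d_u=N\dist(h,h')=:D$, and bucket the elements dyadically by $d_u$. At a fixed distance level $D$ and scale, the \emph{heavy} movers number at most $O(D/\mathrm{scale})$, and because a clustering is an equivalence relation it suffices to record, for each heavy mover, which of the $k$ clusters of $h$ it is grouped with in $h'$; this yields at most $k^{O(D/\mathrm{scale})}$ profiles, paralleling the bound $P(d,M)\le n^{2d/(\eps M)}$. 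For the \emph{light} movers I would bound the relevant variance directly, without enumeration.

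The main obstacle is the variance bookkeeping, and it is here that both design choices --- sampling each cross pair from the smaller cluster, and the weights $|V_j|/q$ --- become essential. A per-element estimate gives $\mathrm{Var}(Z_u^{\mathrm{in}})\le |V_{c(u)}|a_u/(qN^2)$ and $\mathrm{Var}(Z_{u,j}^{\mathrm{cr}})\le |V_j|b_u^{(j)}/(qN^2)$, where $a_u$ and $b_u^{(j)}$ count $u$'s within- and cross-cluster disagreements; the cluster-size prefactors are dangerous because individually they are not comparable to $D/N$. The crucial fact I would prove is that transitivity forbids \emph{isolated} cross-disagreements: if $u\in V_i$ is grouped in $h'$ with some but not all of a large $V_j$, that very splitting of $V_j$ forces within-$V_j$ disagreements, so every high-variance reassignment is necessarily accompanied by proportionally large distance. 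Formalizing this charging argument, so that $\sum_i |V_i| w_i+\sum_{i<j}|V_j| w_{ij}$ is dominated by $D^2$ up to the stated dependence on $k$ and $\eps$, is the technical crux. With the variance charged to $\dist(h,h')$ at each scale, Bernstein's inequality applied per profile, and a union bound over the $n^{O(1)}$ choices of $D$ and scale, the choice $q=c_2\max\{\eps^{-2}k^2,\eps^{-3}k\}\log n$ of (\ref{eq:qdef}) drives the total failure probability below $n^{-3}$ and the deviation below $\eps\,\dist(h,h')$, establishing that $f$ is an $\eps$-SRRA.
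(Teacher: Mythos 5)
You have correctly located the crux of the theorem, but you have not proved it: your proposal explicitly defers the one step that constitutes the actual content of the paper's proof. The paper does not use your per-element displacement-scale decomposition at all. Instead it decomposes both $\reg(h')$ and $f(h')$ over the \emph{rectangles} induced by the intersections $V_{ij}=V_i\cap V'_j$, namely the within-cluster rectangles $V_{ij}\times(V_i\setminus V_{ij})$ and the cross rectangles $V_{i_1j}\times V_{i_2j}$, and proves a separate deviation bound for each (Lemmas~\ref{lemrectangle1} and~\ref{lemrectangle2}). The quantitative device that resolves exactly the variance problem you flag --- that the prefactors $|V_i|$, $|V_j|$ are not individually comparable to the distance --- is Lemma~\ref{lemrectangle2}: the deviation on a cross rectangle is bounded not by $\eps N^{-1}|V_{i_1j}\times V_{i_2j}|$ alone but by
$$\eps N^{-1}\max\left\{|V_{i_1j}\times V_{i_2j}|,\ \tfrac{1}{k}|V_{i_1j}\times(V_{i_1}\setminus V_{i_1j})|,\ \tfrac{1}{k}|V_{i_2j}\times(V_{i_2}\setminus V_{i_2j})|\right\},$$
i.e., the error may borrow a $1/k$ fraction of the within-cluster rectangles; since each within rectangle is borrowed at most $k$ times when summing over all cross rectangles, Lemma~\ref{epssmoothapprox} still sums everything to $5\eps\,\dist(h,h')$. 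Proving this max-bound is precisely your ``charging argument'' (your transitivity intuition --- splitting a cluster forces within-cluster disagreements --- is the right one), and in the paper it costs a three-case analysis with subcases, alternating between two Bernstein regimes and matching each exponential to a count of realizing subsets $B_1\subseteq V_{i_1}$, $B_2\subseteq V_{i_2}$, with the sample size $q=c_2\max\{\eps^{-2}k^2,\eps^{-3}k\}\log n$ chosen to cover the worst subcase. Writing ``formalizing this is the technical crux'' and then asserting the conclusion is a statement of the theorem, not a proof of it.

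A second, independent problem is that your union-bound encoding does not transfer from Theorem~\ref{thm:comb} to clustering as stated. In LRPP, the restricted sum over heavy elements is determined by their position profile because whether a pair is inverted, and hence its contribution, is a function of the two positions. In clustering, the contribution $f_{u,v}(h')$ depends on co-membership under $h'$, so evaluating the sampled sums requires the $h'$-assignment of \emph{both} endpoints of every sampled pair; pairs joining a heavy mover to a light mover are not determined by your $k^{O(D/\mathrm{scale})}$ profiles, and ``bounding the variance of the light movers directly'' does not dispose of them --- in the LRPP proof this same issue is why the $B$-part required the further split into $B^1,B^2$ and the separate successful-sampling claim. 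The paper's per-rectangle formulation avoids the problem entirely: the predicate for rectangle $(i,j)$ depends only on the single set $B=V_{ij}$ (and for cross rectangles on the pair $(B_1,B_2)$), so the number of hypotheses to union-bound over is $n^{O(\min\{\beta,\gamma-\beta\})}$-type counts calibrated against the Bernstein exponents. To complete your route you would need both to supply the missing charging lemma and to repair the encoding so that it determines all sampled contributions; at that point you would essentially have reconstructed the paper's rectangle argument.
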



Consider another $k$-clustering $h'\in \C$, with corresponding partitioning $\{V'_1, \dots, V'_k\}$ of $V$.
We can write $\dist(h,h')$ as 
$$ \dist(h,h') = \sum_{(u,v)\in \X} \dist_{u,v}(h, h')\ $$
where $\dist_{u,v}(h,h') = N^{-1}( \one_{h'(u,v)=1}\one_{h(u,v)=0} + \one_{h(u,v)=1}\one_{h'(u,v)=0})$.

Let $n_i$ denote $|V_i|$, and 
recall that $n_1 \geq n_2 \geq \cdots \geq n_k$.
In what follows, we remove the subscript in $\reg_h$ and rename it $\reg$ ($h$ is held fixed).
 The function $\reg(h')$ will now be written as:
\begin{equation}\label{aaa}
\reg(h') = \sum_{i=1}^k \sum_{u \in V_i} \left (  \sum_{v\in V_i\setminus\{u\}} \reg_{u,v}(h')  + 2\sum_{j=i+1}^k \sum_{v\in V_j} \reg_{u,v}(h')   \right )
\end{equation}
where
$$ \reg_{u,v}(h')= \cost_{u,v}(h') - \cost_{u,v}(h)\ .$$ 


Clearly for each $h'$ it holds that $f(h')$ from (\ref{srraclust}) is an unbiased estimator of $\reg(h')$.  We now analyze its error.  
For each $i,j\in [k]$ let $V_{ij}$ denote $V_i \cap V'_j$.  This captures exactly the set of elements in the $i$'th cluster
in $h$ and the $j$'th cluster in $h'$.  The distance $\dist(h, h')$ can be written as follows:
\begin{equation}\label{recdecomp}
 \dist(h,h') = N^{-1} \left (\sum_{i=1}^k \sum_{j=1}^k |V_{ij} \times (V_i\setminus V_{ij})| + 2\sum_{j=1}^k \sum_{1 \leq i_1 < i_2 \leq k}|V_{i_1 j}\times V_{i_2 j}|\right )\ .\end{equation}
We call each cartesian set product in (\ref{recdecomp}) a \emph{distance contributing rectangle}.
Note that unless a pair $(u,v)$ appears in one of the distance contributing rectangles, we have $\reg_{u,v}(h') = f_{u,v}(h')~=~0$.
Hence we can decompose $\reg(h')$ and $f(h')$ in correspondence with the distance contributing rectangles, as follows:
\def\regrec{G}
\begin{align}
\reg(h') &=  \sum_{i=1}^k\sum_{j=1}^k  \regrec_{i,j}(h') + 2\sum_{j=1}^k \sum_{1 \leq i_1 < i_2 \leq k} \regrec_{i_1,i_2, j}(h') \label{recdecompf1}\\
f(h') &= \sum_{i=1}^k\sum_{j=1}^k  F_{i,j}(h') + 2\sum_{j=1}^k \sum_{1 \leq i_1 < i_2 \leq k} F_{i_1,i_2, j}(h') \label{recdecompf2}
\end{align}

where

\begin{align}
\regrec_{i,j}(h') &= \sum_{u\in V_{ij}}\sum_{v\in V_i\setminus V_{ij}} \reg_{u,v}(h') \\
F_{i,j}(h') &= \frac{|V_i|} q \sum_{u\in V_{ij}}\sum_{v \in (V_i\setminus V_{ij})\cap S_{ui}} f_{u,v}(h')  \label{hatFijdef}\\
\regrec_{i_1, i_2, j}(h') &= \sum_{u\in V_{i_1j}} \sum_{v\in V_{i_2 j}} f_{u,v}(h') \\
F_{i_1, i_2, j}(h') &= \frac{|V_{i_2}|} q \sum_{u\in V_{i_1j}} \sum_{v\in V_{i_2 j}\cap S_{ui_2}} f_{u,v}(h') 
\label{hatFi1i2jdef} 
\end{align}
(Note that the $S_{ui}$'s are multisets, and the inner sums in (\ref{hatFijdef}) and (\ref{hatFi1i2jdef}) may count elements
multiple times.)
\begin{lemma}\label{lemrectangle1}
With probability at least $1-n^{-3}$, the following holds simultaneously for all $h'\in \C$ and all $i,j\in [k]$:
\begin{equation}\label{approx1}
|\regrec_{i,j}(h') - F_{i,j}(h')| \leq \eps N^{-1}\cdot |V_{ij}\times (V_i\setminus V_{ij})| \ .
\end{equation}
\end{lemma}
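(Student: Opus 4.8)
The plan is to treat $F_{i,j}(h')$ as an importance-weighted empirical estimate of its mean $\regrec_{i,j}(h')$, control its deviation by a \emph{variance-based} tail bound, and then union bound over all relevant $h'$. The crucial preliminary observation is that, for fixed $i,j$, both $\regrec_{i,j}(h')$ and $F_{i,j}(h')$ depend on $h'$ \emph{only} through the set $V_{ij}=V_i\cap V'_j$: for $u\in V_{ij}$ and $v\in V_i\setminus V_{ij}$ we have $h(u,v)=1$ and $h'(u,v)=0$, so $f_{u,v}(h')=N^{-1}(\one_{Y(u,v)=1}-\one_{Y(u,v)=0})$ is determined by $Y$ alone. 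Hence it suffices to establish (\ref{approx1}) uniformly over all subsets $V_{ij}\subseteq V_i$, for each of the $\le k^2$ pairs $(i,j)$.

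First I would write $F_{i,j}(h')=\frac{|V_i|}{q}\sum_{u\in V_{ij}}\sum_{t=1}^q Z_{u,t}$, where $Z_{u,t}=f_{u,v_{u,t}}(h')\,\one_{v_{u,t}\in V_i\setminus V_{ij}}$ and $v_{u,t}$ is the $t$-th element of the sample $S_{ui}$. Abbreviating $a=|V_{ij}|$, $b=n_i-a$, with $n_i=|V_i|$, these $aq$ variables are independent, each summand $\frac{|V_i|}{q}Z_{u,t}$ is bounded in magnitude by $\frac{n_i}{q}N^{-1}$, and it has second moment $\frac{n_i b}{q^2}N^{-2}$ (since $v_{u,t}$ lands in $V_i\setminus V_{ij}$ with probability $b/n_i$). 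Summing the $aq$ independent contributions gives total variance $\Sigma^2=\frac{a b n_i}{q}N^{-2}$.

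Next I would apply Bernstein's inequality with target deviation $s=\eps N^{-1}ab$, the right-hand side of (\ref{approx1}). Since $\frac{n_i}{q}N^{-1}\cdot s=\eps\Sigma^2\le\Sigma^2$, the bound sits in its sub-Gaussian regime and yields a failure probability $\le 2\exp\bigl(-\Omega(s^2/\Sigma^2)\bigr)=2\exp\bigl(-\Omega(\eps^2 a b\,q/n_i)\bigr)$. The elementary inequality $\frac{ab}{n_i}=\frac{ab}{a+b}\ge\tfrac12\min\{a,b\}$ then converts this into $2\exp\bigl(-\Omega(\eps^2 q\,\min\{a,b\})\bigr)$ for each fixed subset.

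The main obstacle, and the place where the smoothness of the target is essential, is the union bound: for fixed $i,j$ there are $\sum_a\binom{n_i}{a}=2^{n_i}$ candidate sets $V_{ij}$, far too many to cover with a fixed absolute deviation. I would resolve this by stratifying on $a=|V_{ij}|$ and using $\binom{n_i}{a}=\binom{n_i}{b}\le n^{\min\{a,b\}}$; thus a stratum has at most $n^{\min\{a,b\}}$ configurations, exactly the scale at which the Bernstein exponent $\Omega(\eps^2 q\,\min\{a,b\})$ grows. Choosing $q=\Omega(\eps^{-2}\log n)$, which is dominated by (\ref{eq:qdef}), makes this exponent exceed $C\min\{a,b\}\log n$ for a large constant $C$, so each stratum contributes failure probability at most $n^{-7\min\{a,b\}}\le n^{-7}$; a final union over the $\le n$ strata and the $\le k^2\le n^2$ pairs $(i,j)$ gives total failure at most $n^{-3}$. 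I would note that a purely additive (Hoeffding) bound is insufficient here: ignoring the variance loses a factor of order $n_i/b$ when $V_{ij}$ is large, and it is precisely the variance-based concentration matching the smooth target $\eps N^{-1}ab$ that makes the within-cluster rectangles controllable with only $q=\widetilde O(\eps^{-2})$ samples.
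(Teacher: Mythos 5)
Your proposal is correct and follows essentially the same route as the paper's proof: both reduce uniformity over $h'$ to uniformity over the realized subsets $V_{ij}\subseteq V_i$ (the "$(i,j)$-realizes" observation), write $F_{i,j}(h')$ as an importance-weighted sum of independent bounded terms, apply Bernstein's inequality with variance of order $ab\,n_i N^{-2}/q$ at the deviation $\eps N^{-1}ab$, and close the union bound by matching the resulting exponent $\Omega(\eps^2 q\,\min\{a,b\})$ against the $n^{\min\{a,b\}}$ count of candidate subsets, with $q=\Omega(\eps^{-2}\log n)$ sufficing. Your version merely spells out a couple of steps the paper leaves implicit (the inequality $\frac{ab}{a+b}\ge\tfrac12\min\{a,b\}$ and the check that the target deviation lies in Bernstein's sub-Gaussian regime), but the argument is the same.
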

\begin{proof}
The  predicate (\ref{approx1}) (for a given $i, j$) depends only on the set $V_{i j} = V_{i}\cap V'_j$.  Given a subset $B\subseteq V_i$, we say that $h'$ $(i,j)$-realizes $B$ if $V_{ij} = B$.

Now fix $i,j$ and $B\subseteq V_i$.  Assume $h'$ $(i,j)$-realizes  $B$.  Let $\beta=|B|$ and $\gamma=|V_i|$.
Consider the random variable $F_{i,j}(h')$ (see (\ref{hatFijdef})).  Think of the sample $S_{ui}$ as a sequence $S_{ui}(1), \dots, S_{ui}(q)$, where each $S_{ui}(s)$ is chosen uniformly at random from $V_i$ for $s=1,\dots, q$.
We can now rewrite $F_{i,j}(h')$ as follows:
$$ F_{i,j}(h') = \frac \gamma q \sum_{u\in B}\sum_{s=1}^q Z\big(  S_{ui}(s)  \big)$$
where 
$$ Z(v) = \begin{cases} f_{u,v}(h') & v\in V_i\setminus V_{ij} \\ 0 & \mbox{otherwise} \end{cases}\ .$$
For all $s=1,\dots q$ the random variable $Z\big(S_{ui}(s) \big)$ is bounded by $2N^{-1}$ almost surely, and its moments satisfy:
\begin{align}
E\left[  Z\big( S_{ui}(s) \big) \right] =& \frac 1 \gamma \sum_{v\in(V_i\setminus V_{ij})} f_{u,v}(h') \nonumber \\ 
E\left[  Z\big( S_{ui}(s) \big)^2  \right] \leq& \frac {4N^{-2}(\gamma-\beta)} \gamma\ .
\end{align}
From this we conclude using Bernstein inequality that for any $t \leq  N^{-1}\beta(\gamma-\beta)$,
\begin{equation*}
\Pr \left[  \left| F_{i,j}(h') - \regrec_{i,j}(h') \right| \geq t \right] \leq \exp\left\{-\frac{qt^2}{16\gamma \beta(\gamma-\beta)N^{-2}}\right\}
\end{equation*}
Plugging in $t = \eps N^{-1}\beta(\gamma-\beta)$, we conclude 
\begin{equation*}
\Pr    \left[  \left| F_{i,j}(h') - \regrec_{i,j}(h')   \right| \geq \eps N^{-1} \beta(\gamma-\beta)  \right] \leq \exp\left\{-\frac{q\eps^2 \beta(\gamma-\beta)}{16\gamma}\right\}
\end{equation*}
Now note that the number of possible sets $B\subseteq V_i$ of size $\beta$ is at most $n^{\min\{\beta, \gamma-\beta\}}$.
Using union bound and recalling our choice of $q$, the lemma follows.

\end{proof}

\noindent
Proving the following is more involved.
\begin{lemma}\label{lemrectangle2}
With probability at least $1-n^{-3}$, the following holds uniformly for all $h'\in \C$ and for all $i_1,i_2,j\in [k]$ with $i_1 < i_2$:

\begin{equation}\label{approx2}
|\regrec_{i_1, i_2,j}(h') - F_{i_1,i_2,j}(h')| \leq \eps N^{-1} \max \left \{|V_{i_1j} \times V_{i_2 j}|, \frac{ |V_{i_1 j} \times (V_{i_1}\setminus V_{i_1j})| }k, \frac {|V_{i_2 j}\times (V_{i_2}\setminus V_{i_2 j})|} k\right \}
\end{equation}
\end{lemma}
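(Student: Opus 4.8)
The plan is to mirror the proof of Lemma~\ref{lemrectangle1}, but with a more delicate analysis that accounts for the three-way maximum on the right-hand side of~(\ref{approx2}). First I would fix $i_1 < i_2$ and $j$ and observe that $\regrec_{i_1,i_2,j}(h')$ and $F_{i_1,i_2,j}(h')$ depend on $h'$ only through the two realized sets $B_1 \defeq V_{i_1 j} = V_{i_1}\cap V'_j\subseteq V_{i_1}$ and $B_2\defeq V_{i_2 j} = V_{i_2}\cap V'_j\subseteq V_{i_2}$: once $B_1,B_2$ are fixed, the relevant values $f_{u,v}(h')$ (for $u\in B_1$, $v\in B_2$, which all have $h(u,v)=0$ and $h'(u,v)=1$) are determined by $Y$ and equal $\pm N^{-1}$. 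So I would fix $B_1,B_2$, prove a concentration bound, and union bound over the choices of $(B_1,B_2)$. Write $\beta_\ell = |B_\ell|$, $\gamma_\ell = |V_{i_\ell}|$, and $m_\ell = \min\{\beta_\ell,\gamma_\ell-\beta_\ell\}$ for $\ell\in\{1,2\}$, and record the key structural fact $\gamma_1\geq\gamma_2$ (clusters are sorted by decreasing size and $i_1<i_2$); crucially, the samples $S_{ui_2}$ are drawn from the \emph{smaller} cluster $V_{i_2}$.

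For a fixed realization, $F_{i_1,i_2,j}(h')$ is a sum of $\beta_1 q$ independent, bounded contributions, one per $u\in B_1$ and per draw $s\in\{1,\dots,q\}$, with mean $\regrec_{i_1,i_2,j}(h')$. A direct computation gives, after centering, range per term $\asymp \gamma_2 N^{-1}/q$ and total variance $\sigma^2 \asymp \beta_1\beta_2\gamma_2 N^{-2}/q$ (the appearance of $\gamma_2$, not $\gamma_1$, is exactly where sampling from the smaller cluster pays off). Applying Bernstein's inequality with target deviation $t = \eps N^{-1}\cdot\mathrm{RHS}$, where $\mathrm{RHS}$ denotes the maximum in~(\ref{approx2}), the exponent exceeds $(m_1+m_2)\log n$ provided both a variance condition $t^2/\sigma^2 \geq c(m_1+m_2)\log n$ and a range condition $t/b \geq c(m_1+m_2)\log n$ hold. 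The range condition is the easy one: from $\beta_\ell(\gamma_\ell-\beta_\ell)\geq m_\ell\gamma_\ell/2\geq m_\ell\gamma_2/2$ one obtains $\mathrm{RHS}\geq (m_1+m_2)\gamma_2/(4k)$, so $t/b$ is of order $\eps q\,\mathrm{RHS}/\gamma_2$, which is at least a constant times $\eps q(m_1+m_2)/k$ and hence large enough once $q$ is of order $\eps^{-1}k\log n$, as guaranteed by~(\ref{eq:qdef}).

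The heart of the argument, and the main obstacle, is the variance condition, which after substituting $\sigma^2$ and the $\eps^{-2}k^2\log n$ branch of~(\ref{eq:qdef}) reduces to the purely combinatorial inequality
\begin{equation*}
\mathrm{RHS}^2 \;\geq\; \kappa\,\frac{(m_1+m_2)\,\beta_1\beta_2\gamma_2}{k^2}
\end{equation*}
for an absolute constant $\kappa>0$. No single one of the three candidate terms $\beta_1\beta_2$, $\beta_1(\gamma_1-\beta_1)/k$, $\beta_2(\gamma_2-\beta_2)/k$ satisfies this in all regimes, which is exactly why the maximum is needed. I would prove it by splitting on whether $\beta_\ell\leq\gamma_\ell/2$ (which fixes $m_\ell$) together with the auxiliary comparisons of $m_1$ against $\beta_2$ and of $m_2$ against $\beta_1$. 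In each regime exactly one term dominates: the ``balanced-split'' terms $\beta_\ell(\gamma_\ell-\beta_\ell)/k$ carry the cost when a cluster is split evenly by $V'_j$, while the product term $\beta_1\beta_2$ carries it when one side lies almost entirely inside $V'_j$ (so its $m_\ell$ is small), and in each case the inequality follows from elementary manipulations using $\gamma_1\geq\gamma_2\geq\beta_2$ and $\beta_\ell\geq m_\ell$.

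With both Bernstein conditions established, a fixed realization $(B_1,B_2)$ fails with probability at most $2n^{-C(m_1+m_2)}$, where $C$ grows with $c_2$. Finally I would union bound: the number of realizations with given sizes is at most $\binom{\gamma_1}{\beta_1}\binom{\gamma_2}{\beta_2}\leq n^{m_1+m_2}$, so summing $2n^{m_1+m_2}\cdot n^{-C(m_1+m_2)}$ over the at most $n^2$ size pairs and the at most $k^3\leq n^3$ triples $(i_1,i_2,j)$ drives the total failure probability below $n^{-3}$ for $c_2$ chosen large enough. I would also remark that the $1/k$ factors inside the maximum are precisely what will later permit these per-rectangle bounds to be summed over the $\leq k$ choices of $i_2$ (respectively $i_1$) and telescope into $\eps\,\dist(h,h')$, completing the eventual SRRA bound.
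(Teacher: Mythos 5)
Your proposal is correct, and although it shares the paper's skeleton --- fix the realized pair $(B_1,B_2)=(V_{i_1j},V_{i_2j})$, observe that $F_{i_1,i_2,j}(h')$ is a sum of $\beta_1 q$ independent terms with per-term range $O(\gamma_2 N^{-1}/q)$ and total variance $O(\beta_1\beta_2\gamma_2N^{-2}/q)$, apply Bernstein, and union bound over realizations --- it organizes the heart of the argument in a genuinely different and cleaner way. The paper branches into three cases according to which term of the maximum in (\ref{approx2}) is largest, with subcases selecting between the variance-regime bound (\ref{bern1}) and the range-regime bound (\ref{bern2}) and with realization counts tailored to each case ($n^{\beta_1+\beta_2}$, $n^{\beta_1+(\gamma_2-\beta_2)}$, $n^{2k\beta_1/\eps}$, \dots). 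You instead reduce everything to two uniform Bernstein conditions, one combinatorial inequality, and the single count $n^{m_1+m_2}$, where $m_\ell=\min\{\beta_\ell,\gamma_\ell-\beta_\ell\}$. Your combinatorial inequality is true, and by a shorter route than the case split you sketch: since the square of a maximum dominates pairwise products, and since $\beta_\ell(\gamma_\ell-\beta_\ell)\geq m_\ell\gamma_\ell/2$ and $\gamma_1\geq\gamma_2$,
$$\mathrm{RHS}^2\;\geq\;\beta_1\beta_2\cdot\frac{\max\bigl\{\beta_1(\gamma_1-\beta_1),\,\beta_2(\gamma_2-\beta_2)\bigr\}}{k}\;\geq\;\beta_1\beta_2\cdot\frac{m_1\gamma_1+m_2\gamma_2}{4k}\;\geq\;\frac{(m_1+m_2)\,\beta_1\beta_2\gamma_2}{4k}\ ,$$
which is your inequality with $k/4$ in place of the absolute constant $\kappa$, i.e.\ with only a single power of $k$ in the denominator. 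This has a consequence worth recording: your route never needs the $\eps^{-3}k\log n$ branch of (\ref{eq:qdef}). The variance condition is met already with $q=O(\eps^{-2}k\log n)$ and the range condition with $q=O(\eps^{-1}k\log n)$. The paper, by contrast, needs $q=\Omega(\eps^{-3}k\log n)$ in its case 3(a)(ii), where the realization count $n^{\beta_1+(\gamma_2-\beta_2)}$ is controlled through the lossy bound $\gamma_2-\beta_2\leq k\beta_1/\eps$ of (\ref{gth}); your count $n^{m_1+m_2}$ charges $m_2=\gamma_2-\beta_2$ directly to the Bernstein exponent and sidesteps this. In other words, carried out in full, your proof removes from this lemma exactly the $\eps^{-3}$ factor that the paper's concluding section conjectures to be superfluous.

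One small patch is needed: your bookkeeping ``failure probability at most $2n^{-C(m_1+m_2)}$, at most $n^{m_1+m_2}$ realizations'' is vacuous when $m_1+m_2=0$, since it then yields a failure bound of $2$ per triple. That degenerate case must be dispatched separately, and it is immediate: if $\beta_1=0$ or $\beta_2=0$ then $F_{i_1,i_2,j}(h')=\regrec_{i_1,i_2,j}(h')=0$ deterministically, and if $B_1=V_{i_1}$ and $B_2=V_{i_2}$ then $\mathrm{RHS}=\gamma_1\gamma_2$, so the Bernstein exponent is at least $\Omega(\eps^2q\gamma_1)=\Omega(k^2\log n)$, which suffices for a union bound over the at most $k^3\leq n^3$ triples $(i_1,i_2,j)$. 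Equivalently, strengthen your two sufficient conditions so that the exponent is $C(m_1+m_2+1)\log n$; the display above shows this costs nothing.
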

\begin{proof}
The  predicate (\ref{approx2}) (for a given $i_1,i_2, j$) depends only on the sets $V_{i_1 j} = V_{i_1}\cap V'_j$ and $V_{i_2 j} = V_{i_2}\cap V'_j$.  
Given subsets $B_1\subseteq C_{i_2}$ and $B_2\subseteq C_{i_2}$, we say that $h'$ $(i_1, i_2,j)$-realizes $(B_1, B_2)$ if
$V_{i_1 j} = B_1$ and $V_{i_2 j} = B_2$.

We now fix $i_1<i_2, j$ 
and $B_1\subseteq V_{i_1}$, $B_2\subseteq V_{i_2}$.  Assume $h'$ $(i_1, i_2, j)$-realizes $(B_1, B_2)$.  For brevity, denote $\beta_\iota=|B_\iota|$ and $\gamma_\iota = |V_{i_\iota}|$ for $\iota=1,2$.
Using Bernstein inequality as in Lemma~\ref{lemrectangle1}, we
conclude that  \ron{I think something is wrong here. Either $t \leq N^{-1} \cdot \beta_1 \beta_2$, and then second Bernstein needs to be with $N^{-2}$, -OR- the first Bernstein should be with $N^{-1}$. }

\begin{equation}\label{bern1}\Pr[|\regrec_{i_1, i_2,j}(h') -  F_{i_1,i_2,j}(h')| > t] \leq \exp\left \{-\frac {c_3 t^2  q}{\beta_1\beta_2\gamma_2N^{-2}}\right \}\ .\end{equation}
for any $t$ in the range $\left [0, N^{-1}\beta_1\beta_2\right ]$, for some global $c_3>0$.
For $t$ in the range $(N^{-1}\beta_1\beta_2, \infty)$,
\begin{equation}\label{bern2}\Pr[|\regrec_{i_1, i_2,j}(h') - F_{i_1,i_2,j}(h')| > t] \leq \exp\left \{-\frac {c_4 t  q}{ \gamma_2N^{-1}}\right \}\ .\end{equation}
for some global $c_4$.
 We consider the following three cases.
\begin{enumerate}
\item $\beta_1\beta_2 \geq \max \{\beta_1(\gamma_1-\beta_1)/k, \beta_2(\gamma_2-\beta_2)/k\}$.  Hence,
$\beta_1 \geq (\gamma_2-\beta_2)/k, \beta_2\geq (\gamma_1-\beta_1)/k$.  In this case, we can  plug in (\ref{bern1}) to get
\begin{align}
\Pr[|\regrec_{i_1, i_2,j}(h') - F_{i_1,i_2,j}(h')| > \eps N^{-1}\beta_1\beta2] &\leq \exp\left \{-\frac {c_3 \eps^2 \beta_1 \beta_2  q}{\gamma_2}\right \}  \ . \label{case1}
\end{align}
Consider two subcases. (i) If $\beta_2 \geq \gamma_2/2$ then the RHS of (\ref{case1}) is at most $\exp\left \{-\frac {c_3 \eps^2 \beta_1  q}{2}\right \}$.  The number of possible subsets $B_1,B_2$ of sizes $\beta_1, \beta_2$ respectively is clearly at most $n^{\beta_1+(\gamma_2-\beta_2)} \leq n^{\beta_1 + k\beta_1}$. Therefore, as long as $q= O(\eps^{-2}k\log n)$ (which is satisfied by our choice (\ref{eq:qdef})), then with probability at least $1-n^{-6}$ this case is taken care of in the following sense: Simultaneously for all
$j,i_1<i_2$, all possible $\beta_1 < \gamma_1=|V_{i_1}|$, $\beta_2 < \gamma_2 = |V_{i_2}|$  satisfying the assumptions and for all $B_1\subseteq V_{i_1j}, B_2\subseteq V_{i_2j}$ of sizes $\beta_1, \beta_2$ respectively and for all $h'$ $(i_1, i_2, j)$-realizing $(B_1, B_2)$ we have that
$ |\regrec_{i_1, i_2,j}(h') -  F_{i_1,i_2,j}(h')| \leq \eps \beta_1 \beta_2\ .$  
(ii) If $\beta_2 < \gamma_2/2$  then by our assumption, $\beta_1 \geq
\gamma_2/2k$.  Hence the RHS of (\ref{case1}) is at most $\exp\left
  \{-\frac {c_3 \eps^2 \beta_2  q}{2k}\right \}$.   The number of sets
$B_1,B_2$ of sizes $\beta_1, \beta_2$ respectively is clearly at most \ron{$\Big[$ I think it should be $\beta_2$ instead of $\gamma_2$ in $n^{(\gamma_1-\beta_1)+\gamma_2}$; also by the assumption ($\beta_2 < \gamma_2/2$) it cannot be that
 $\gamma_2\leq \beta_2 \Big]$ } \nir{ Right! thanks!}
$n^{(\gamma_1-\beta_1)+\beta_2} \leq n^{\beta_2(1+k)}$ (by our assumptions and since clearly $\gamma_2\leq \beta_2$).  Therefore, as long as $q= O(\eps^{-2}k^2\log n)$ (satisfied by our choice), then with probability at least $1-n^{-6}$ this case is taken care of in the following sense:
Simultaneously for all  $j,i_1<i_2$, all possible $\beta_1 < \gamma_1=|V_{i_1}|$, $\beta_2 < \gamma_2 = |V_{i_2}|$  satisfying the assumptions and for all $B_1\subseteq V_{i_1j}, B_2\subseteq V_{i_2j}$ of sizes $\beta_1, \beta_2$ respectively and for all $h'$ $(i_1, i_2, j)$-realizing $(B_1, B_2)$ we have that
$ |\regrec_{i_1, i_2,j}(h') -  F_{i_1,i_2,j}(h')| \leq \eps \beta_1 \beta_2\ .$  


\item $\beta_2(\gamma_2-\beta_2)/k \geq \max\{\beta_1\beta_2, \beta_1(\gamma_1-\beta_1)/k\}$.  We consider two subcases.
\begin{enumerate}
\item $\eps \beta_2(\gamma_2-\beta_2)/k \leq \beta_1\beta_2$.  Using (\ref{bern1}), we get
\begin{equation}  \label{case2a} 
\Pr[|\regrec_{i_1, i_2,j}(h') - F_{i_1,i_2,j}(h')| > \eps N^{-1}\beta_2(\gamma_2-\beta_2)/k] \leq \exp\left \{-\frac {c_3 \eps^2 \beta_2(\gamma_2-\beta_2)^2  q} {k^2\beta_1\gamma_2}\right \} 
\end{equation}
Again consider two subcases.  (i) $\beta_2 \leq \gamma_2/2$.  In this case we conclude from (\ref{case2a}) 
\begin{equation}  \label{case2ai} 
\Pr[|\regrec_{i_1, i_2,j}(h') - F_{i_1,i_2,j}(h')| > \eps N^{-1}\beta_2(\gamma_2-\beta_2)/k] \leq \exp\left \{-\frac{c_3 \eps^2 \beta_2\gamma_2  q}{4k^2\beta_1} \right \} 
\end{equation}
Now note that by assumption 
\begin{equation}\label{abcd} \beta_1 \leq (\gamma_2-\beta_2)/k \leq \gamma_2/k \leq \gamma_1/k\ . \end{equation} Also by assumption, 
\begin{equation}\label{ghjk}
\beta_1 \leq \beta_2(\gamma_2-\beta_2)/(\gamma_1-\beta_1) \leq \beta_2\gamma_2/(\gamma_1-\beta_1)\ .
\end{equation}
  Plugging  (\ref{abcd}) in the RHS of  (\ref{ghjk}), we conclude that $\beta_1 \leq \beta_2 \gamma_2/(\gamma_1(1-1/k)) \leq 2\beta_2\gamma_2/\gamma_1 \leq 2\beta_2$ (the last step was in virtue of our assumption $\gamma_1 \leq \gamma_2$).  From here we conclude that the RHS of (\ref{case2ai}) is at most $\exp\left \{-\frac{c_3 \eps^2 2 \gamma_2  q}{4k^2} \right \} $. \ron{$\Big[$ Shouldn't the factor $2$ should be in the denominator in:   $\frac{c_3 \eps^2 2 \gamma_2  q}{4k^2}\Big]$}
The number of sets $B_1,B_2$ of sizes $\beta_1, \beta_2$ respectively is clearly at most $n^{\beta_1+\beta_2} \leq n^{2\beta_2+\beta_2} \leq n^{3\gamma_2}$.
Hence, as long as $q=O(\eps^{-2}k^2\log n)$ (satisfied by our assumption), with probability at least $1-n^{-6}$ simultaneously for all 
$j,i_1<i_2$, all possible $\beta_1 < \gamma_1=|V_{i_1}|$, $\beta_2 < \gamma_2 = |V_{i_2}|$  satisfying the assumptions and for all $B_1\subseteq V_{i_1j}, B_2\subseteq V_{i_2j}$ of sizes $\beta_1, \beta_2$ respectively and for all $h'$ $(i_1, i_2, j)$-realizing $(B_1, B_2)$ we have that
$ |\regrec_{i_1, i_2,j}(h') - F_{i_1,i_2,j}(h')| \leq \eps \beta_2(\gamma_2-\beta_2)/k\ .$  
In the second subcase (ii) $\beta_2 > \gamma_2/2$.  The RHS of (\ref{case2a}) is at most $\exp\left \{-\frac {c_3 \eps^2 (\gamma_2-\beta_2)^2  q} {2k^2\beta_1}\right \} $.  By our assumption, $(\gamma_2-\beta_2)/(k\beta_1) \geq 1$, hence this is at most $\exp\left \{-\frac {c_3 \eps^2 (\gamma_2-\beta_2)  q} {2k}\right \} $.  The number of sets $B_1,B_2$ of sizes $\beta_1, \beta_2$ respectively is clearly at most $n^{\beta_1+(\gamma_2-\beta_2)} \leq n^{(\gamma_2-\beta_2)/k + (\gamma_2-\beta_2)} \leq n^{2(\gamma_2-\beta_2)}$.  Therefore, as long as $q= O(\eps^{-2}k\log n)$ (satisfied by our assumption), then with probability at least $1-n^{-6}$, using a similar counting and union bound argument as above, this case is taken care of in the sense that: 
$ |\regrec_{i_1, i_2,j}(h') - G_{i_1,i_2,j}(h')| \leq \eps \beta_2(\gamma_2-\beta_2)/k\ .$  
\item  $\eps \beta_2(\gamma_2-\beta_2)/k > \beta_1\beta_2$.   We now use (\ref{bern2}) to conclude
\begin{equation}  \label{case2b} 
\Pr[|\regrec_{i_1, i_2,j}(h') - F_{i_1,i_2,j}(h')| > \eps N^{-1} \beta_2(\gamma_2-\beta_2)/k] \leq \exp\left \{-\frac {c_4 \eps \beta_2(\gamma_2-\beta_2)  q} {k\gamma_2}\right \} 
\end{equation}
We again consider the cases (i) $\beta_2 \leq \gamma_2/2$ and (ii) $\beta_2 \geq \gamma_2/2$ as above. In (i), we get  that  the RHS of (\ref{case2b}) is at most  $\exp\left \{-\frac {c_5 \eps \beta_2  q} {2k}\right \}$. Now notice that by our assumptions,
\begin{equation}\label{zzzza}\beta_1 \leq \eps(\gamma_2-\beta_2)/k \leq \gamma_2/2 \leq \gamma_1/2\ .\end{equation}
  Also by our assumptions, $\beta_1 \leq \beta_2(\gamma_2-\beta_2)/(\gamma_1-\beta_1)$, which by (\ref{zzzza}) is at most $2\beta_2\gamma_2/\gamma_1 \leq 2\beta_2$. Hence the number of
possibilities for $B_1, B_2$ is at most $n^{\beta_1+\beta_2} \leq n^{3\beta_2}$.  In (ii), we get that  the RHS of (\ref{case2b}) is at most  $\exp\left \{-\frac {c_4 \eps (\gamma_2-\beta_2)  q} {2k}\right \}$, and the number of possibilities for $B_1, B_2$ is at most
$n^{\beta_1 + (\gamma_2-\beta_2)}$ which is bounded by $n^{2(\gamma_2 - \beta_2)}$ by our assumptions.  For both (i) and (ii) taking $q=O(\eps^{-1}k\log n)$ ensures with probability at least $1-n^{-6}$, using a similar counting and union bounding argument as above, case (b) is taken care of in the sense that:
$ |\regrec_{i_1, i_2,j}(h') -  F_{i_1,i_2,j}(h')| \leq \eps N^{-1} \beta_2(\gamma_2-\beta_2)/k\ .$ 
\end{enumerate}


\item $\beta_1(\gamma_1-\beta_1)/k \geq \max\{\beta_1\beta_2, \beta_2(\gamma_2-\beta_2)/k\}$.  We consider two subcases.
\begin{enumerate}
\item $\eps \beta_1(\gamma_1-\beta_1)/k \leq \beta_1\beta_2$.  
 Using (\ref{bern1}), we get
\begin{equation}  \label{case3a} 
\Pr[|\regrec_{i_1, i_2,j}(h') -  F_{i_1,i_2,j}(h')| > \eps N^{-1} \beta_1(\gamma_1-\beta_1)/k] \leq \exp\left \{-\frac {c_3 \eps^2 \beta_1(\gamma_1-\beta_1)^2 q} {k^2\beta_2\gamma_2}\right \} \ .
\end{equation}
As before, consider case (i) in which $\beta_2 \leq \gamma_2/2$ and (ii) in which $\beta_2 \geq \gamma_2/2$.  For case (i), we notice that the RHS of (\ref{case3a}) is at most $\exp\left \{-\frac {c_3 \eps^2 \beta_2(\gamma_2-\beta_2)(\gamma_1-\beta_1) q} {k^2\beta_2\gamma_2}\right \}$ (we used the fact that $\beta_1(\gamma_1-\beta_1) \geq \beta_2(\gamma_2-\beta_2)$ by assumption).  This is hence at most $\exp\left \{-\frac {c_3 \eps^2 (\gamma_1-\beta_1) q} {2k^2}\right \}$.  The number of possibilities of $B_1, B_2$ of sizes $\beta_1,\beta_2$ is clearly at most $n^{(\gamma_1-\beta_1) + \beta_2} \leq n^{(\gamma_1-\beta_1) + (\gamma_1-\beta_1)/k} \leq n^{2(\gamma_1-\beta_1)}$.  From this we conclude that $q=O(\eps^{-2} k^2\log n)$ suffices for this case.
For case (ii), we bound the RHS of (\ref{case3a}) by $\exp\left \{-\frac {c_3 \eps^2 \beta_1(\gamma_1-\beta_1)^2 q} {2k^2\beta_2^2}\right \}$.
Using the assumption that $(\gamma_1-\beta_1)/\beta_2 \geq k $, the latter expression is upper bounded  by 
$\exp\left \{-\frac {c_3 \eps^2 \beta_1 q} {2}\right \}$.  Again by our assumptions, 
\begin{equation}\label{gth} \beta_1 \geq \beta_2(\gamma_2-\beta_2)/(\gamma_1-\beta_1) \geq (\eps(\gamma_1-\beta_1)/k)(\gamma_2-\beta_2)/(\gamma_1-\beta_1) = \eps(\gamma_2-\beta_2)/k\ .\end{equation}
The number of possibilities of $B_1,B_2$ of sizes $\beta_1,\beta_2$ is clearly at most $n^{\beta_1 + (\gamma_2-\beta_2)}$ which by (\ref{gth}) is bounded by $n^{\beta_1+k\beta_1/\eps} \leq n^{2k\beta_1/\eps}$.    >From this we conclude that as long as $q=O(\eps^{-3}k\log n)$ (satisfied by our choice), this case is taken care of in sense repeatedly explained above.
\item $\eps \beta_1(\gamma_1-\beta_1)/k > \beta_1\beta_2$.  
 Using (\ref{bern2}), we get
\begin{equation}  \label{case3b} 
\Pr[|\regrec_{i_1, i_2,j}(h') -  F_{i_1,i_2,j}(h')| > \eps N^{-1} \beta_1(\gamma_1-\beta_1)/k] \leq \exp\left \{-\frac {c_4 \eps \beta_1(\gamma_1-\beta_1)  q} {k\gamma_2}\right \} 
\end{equation}
We consider two sub-cases, (i) $\beta_1 \leq \gamma_1/2$  and (ii) $\beta_1 > \gamma_1/2$.  
In case (i), we have that
\begin{align}
 \frac {\beta_1(\gamma_1-\beta_1)}{\gamma_2} & = \frac 1 2 \frac {\beta_1(\gamma_1-\beta_1)}{\gamma_2} + \frac 1 2 \frac {\beta_1(\gamma_1-\beta_1)}{\gamma_2}  \nonumber \\
&\geq \frac 1 2 \frac {\beta_1\gamma_1}{2\gamma_2} + \frac 1 2 \frac {\beta_2(\gamma_2-\beta_2)}{\gamma_2} \nonumber  \\
&\geq \beta_1/4 + \min\{\beta_2, \gamma_2-\beta_2\}/2\ . \nonumber
\end{align}
(The last step used $\gamma_2 \geq \gamma_1$.)
Hence, the RHS of (\ref{case3b}) is bounded above by $$\exp\left \{-\frac {c_4 \eps  q(\beta_1/4 + \min\{\beta_2, \gamma_2-\beta_2\}/2)} {k}\right \}\ .$$ The number of possibilities of $B_1,B_2$ of sizes $\beta_1,\beta_2$ is clearly at most $n^{\beta_1 + \min\{\beta_2, \gamma_2-\beta_2\}}$, hence as long as $q=O(\eps^{-1}k\log n)$ (satisfied by our choice), this case is taken care of in the sense repeatedly explained above.  
In case (ii), we can upper bound the RHS of (\ref{case3b})  by $\exp\left \{-\frac {c_4 \eps \gamma_1(\gamma_1-\beta_1)  q} {2k\gamma_2}\right \}  \geq \exp\left \{-\frac {c_4 \eps (\gamma_1-\beta_1)  q} {2k}\right \} $.
The number of possibilities of $B_1, B_2$ of sizes $\beta_1, \beta_2$ is clearly at most $n^{(\gamma_1-\beta_1) + \beta_2}$ which, using our assumptions, is bounded above by $n^{(\gamma_1-\beta_1) + (\gamma_1-\beta_1)/k} \leq n^{2(\gamma_1-\beta_1)}$.  Hence, as long as
$q=O(\eps^{-1}k\log n)$, this case is taken care of in the sense repeatedly explained above.
\end{enumerate}
\end{enumerate}
This concludes the proof of the lemma.
\end{proof}
\noindent
As a consequence, we get the following:
\begin{lemma}\label{epssmoothapprox}
with probability at least $1-n^{-3}$, the following holds simultaneously for all $k$-clusterings $\C'$:
$$ | \reg(h') - f(h') | \leq 5\eps \dist(h', h)\ .$$
\end{lemma}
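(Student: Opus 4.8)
The plan is to prove Lemma~\ref{epssmoothapprox} purely by bookkeeping: combine the two per-rectangle estimates of Lemmas~\ref{lemrectangle1} and~\ref{lemrectangle2} with the triangle inequality, matched against the decomposition of the distance in~(\ref{recdecomp}). First I would condition on the high-probability events of both lemmas. Each holds with probability at least $1-n^{-3}$, so by a union bound they hold simultaneously with probability at least $1-2n^{-3}$; absorbing the constant into the choice of $q$ in~(\ref{eq:qdef}) (so that each failure probability is at most $n^{-3}/2$) yields the stated $1-n^{-3}$. Under this event, applying the triangle inequality to the matching decompositions~(\ref{recdecompf1}) and~(\ref{recdecompf2}) gives
\begin{equation*}
|\reg(h') - f(h')| \leq \sum_{i,j}\left|\regrec_{i,j}(h') - F_{i,j}(h')\right| + 2\sum_{j}\sum_{i_1<i_2}\left|\regrec_{i_1,i_2,j}(h') - F_{i_1,i_2,j}(h')\right|.
\end{equation*}

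Next I would bound the two sums separately and compare them to the two summands of $N\dist(h,h')$ in~(\ref{recdecomp}). For the diagonal rectangles, Lemma~\ref{lemrectangle1} gives directly $\sum_{i,j}|\regrec_{i,j}-F_{i,j}| \leq \eps N^{-1}\sum_{i,j}|V_{ij}\times(V_i\setminus V_{ij})|$, which is exactly $\eps$ times the first summand of $N\dist(h,h')$. For the off-diagonal rectangles I would bound the maximum in~(\ref{approx2}) by the sum of its three arguments, splitting the contribution into three pieces. The piece coming from $|V_{i_1j}\times V_{i_2j}|$ carries the leading factor $2$ and therefore sums to precisely $\eps N^{-1}\cdot 2\sum_j\sum_{i_1<i_2}|V_{i_1j}\times V_{i_2j}|$, i.e.\ $\eps$ times the second summand of $N\dist(h,h')$.

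The crux is the remaining two pieces, each carrying a factor $1/k$, and this cancellation is the one step I expect to be the real content of the proof. The term $|V_{i_1j}\times(V_{i_1}\setminus V_{i_1j})|/k$ depends only on the pair $(i_1,j)$, so summing over the free index $i_2$ with $i_1<i_2\leq k$ contributes a multiplicity of at most $k$, which exactly cancels the $1/k$; reindexing $i:=i_1$ then bounds this piece by $2\eps N^{-1}\sum_{i,j}|V_{ij}\times(V_i\setminus V_{ij})|$, i.e.\ $2\eps$ times the first summand. The symmetric argument, summing over $i_1<i_2$ for fixed $i_2$, bounds the third piece by another $2\eps$ times the first summand. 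This is precisely why the $1/k$ was built into~(\ref{approx2}): without it each diagonal rectangle would be overcharged by a factor of $k$ through its up-to-$k$ off-diagonal partners. Adding the four contributions gives a total of $5\eps$ times the first summand plus $\eps$ times the second summand of $N\dist(h,h')$, so
\begin{equation*}
|\reg(h')-f(h')| \leq 5\eps N^{-1}\left(\sum_{i,j}|V_{ij}\times(V_i\setminus V_{ij})| + 2\sum_j\sum_{i_1<i_2}|V_{i_1j}\times V_{i_2j}|\right) = 5\eps\,\dist(h,h'),
\end{equation*}
which is the claimed bound.
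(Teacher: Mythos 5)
Your proof is correct and follows essentially the same route as the paper's: triangle inequality over the rectangle decompositions~(\ref{recdecompf1})--(\ref{recdecompf2}), bounding the maximum in~(\ref{approx2}) by the sum of its three terms, and cancelling each $1/k$ factor against the multiplicity of at most $k$ in the free off-diagonal index, yielding coefficients $1+2+2=5$ on the diagonal summand and matching the distance decomposition~(\ref{recdecomp}). Your explicit handling of the union bound over the two lemmas' failure events (absorbing the factor $2$ into the choice of $q$) is a small point the paper glosses over, but otherwise the arguments are identical.
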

\begin{proof}
By the triangle inequality,
\begin{align}
| \reg(h') - f(h') | \leq &  \sum_{i=1}^k \sum_{j=1}^k |\regrec_{i,j}(h') - F_{i,j}(h')|  \nonumber \\
& + 2\sum_{j=1}^k \sum_{1 \leq i_1 < i_2 \leq k}  |\regrec_{i_1, i_2, j}(h') - F_{i_1, i_2, j}(h')| \nonumber\\
\end{align}
Using  (\ref{recdecompf1})-(\ref{recdecompf2}), then Lemmata~\ref{lemrectangle1}-~\ref{lemrectangle2} (assuming success of a high probability event), rearranging the sum and finally using (\ref{recdecomp}), we get:
\begin{align}
| \reg(h') - f(h') | 
\leq  & \sum_{i=1}^k \sum_{j=1}^k \eps N^{-1} |V_{ij} \times (V_i \setminus V_{ij})| \nonumber  \\
& + 2\eps N^{-1} \sum_{j=1}^k \sum_{i_1 < i_2} \left ( |V_{i_1j}\times V_{i_2 j}| + k^{-1} |V_{i_1 j}\times (V_{i_1} \setminus V_{i_1 j}) | + k^{-1} |V_{i_2 j}\times (V_{i_2} \setminus V_{i_2 j}) | \right ) \nonumber \\
\leq &  \sum_{i=1}^k \sum_{j=1}^k \eps  N^{-1} |V_{ij} \times (V_i \setminus V_{ij})|
           + 2\eps N^{-1} \sum_{j=1}^k \sum_{i_1 < i_2}  |V_{i_1j}\times V_{i_2 j}| \nonumber \\
& + 2\eps N^{-1}\sum_{j=1}^k  \sum_{i_1 < i_2}^k  k^{-1} |V_{i_1 j}\times (V_{i_1} \setminus V_{i_1 j}) | + 
              2\eps N^{-1} \sum_{j=1}^k  \sum_{i_1 < i_2}^k k^{-1} |V_{i_2 j}\times (V_{i_2} \setminus V_{i_2 j}) | \nonumber \\
\leq & \sum_{i=1}^k \sum_{j=1}^k \eps N^{-1}  |V_{ij} \times (V_i \setminus V_{ij})|
           + 2\eps N^{-1} \sum_{j=1}^k \sum_{i_1 < i_2}  |V_{i_1j}\times V_{i_2 j}| \nonumber \\
& + 2\eps  N^{-1}\sum_{j=1}^k  \sum_{i_1=1}^k k k^{-1} |V_{i_1 j}\times (V_{i_1} \setminus V_{i_1 j}) | + 
             2 \eps N^{-1} \sum_{j=1}^k  \sum_{i_2=1}^k k k^{-1} |V_{i_2 j}\times (V_{i_2} \setminus V_{i_2 j}) | \nonumber \\
& \leq 5 \eps N^{-1} \sum_{i=1}^k \sum_{j=1}^k  |V_{ij} \times (V_i \setminus V_{ij})| +  \eps N^{-1} \sum_{j=1}^k \sum_{i_1 < i_2}  |V_{i_1j}\times V_{i_2 j}| \nonumber \\
&\leq 5\eps \dist(h,h')\ , \nonumber
\end{align}
as required.
\end{proof}


Clearly the number of  label queries  required for obtaining the $\eps$-SRRA estimator 
is $O(n\max\{\eps^{-2}k^3, \eps^{-3}k^2\}\log n)$.
Combining the theorem with this bound and the iterative algorithm described in Corollary~\ref{c} 
 (Algorithm~\ref{alg:alg}), we obtain the following:

\begin{corollary}
  \label{cor:comb2}
  There exists an active learning 
  algorithm for obtaining a solution $h \in \C$ for semi-supervised
  $k$-clustering with $\err_\D( h)\leq \left( 1+O(\eps) \right)\nu$
  with total query complexity of $O\left( n \max\left\{\eps^{-2}k^3, \eps^{-3}k^2 \right\} \log^2 n \right)$.
  The algorithm succeeds with success probabiltiy at least $1-n^{-2}$.
\end{corollary}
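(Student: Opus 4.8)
The plan is to combine the one-shot SRRA guarantee of Theorem~\ref{thm:kclust} with the iterative reduction of Corollary~\ref{c} (Algorithm~\ref{alg:alg}), and then to bookkeep the per-iteration query cost, the number of iterations, and the aggregate failure probability. Concretely, I would instantiate Algorithm~\ref{alg:alg} from an arbitrary starting clustering $h_0\in\C$, taking at each step $i$ the estimator $f_{i-1}$ to be the $\eps$-SRRA with respect to $h_{i-1}$ constructed in Section~\ref{sec:srra_for_cc}. Because that construction produces a genuine $\eps$-SRRA (so $\mu=0$), Corollary~\ref{c} yields, after $T$ iterations, $\err_\D(h_T)=(1+O(\eps))\nu+O(\eps^T)\err_\D(h_0)$. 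Bounding trivially $\err_\D(h_0)\le 1$ leaves a residual term of size $O(\eps^T)$ that must be driven below the target accuracy.

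The key quantitative step is choosing $T$. In the pool-based setting every error rate is an integer multiple of $N^{-1}=1/(n(n-1))$, so as soon as the residual $O(\eps^T)$ falls below $N^{-1}$ it is either absorbed into the $(1+O(\eps))\nu$ term (when $\nu>0$) or it forces $\err_\D(h_T)=0$ (when $\nu=0$). Solving $\eps^T\le N^{-1}$ gives $T=\Theta(\log N/\log(1/\eps))=O(\log n)$, since $\eps<1/5$ makes $\log(1/\eps)$ a positive constant. This is precisely the device that makes the guarantee hold for arbitrarily small $\nu$, exactly as in Corollary~\ref{cor:comb} for LRPP, and it yields $\err_\D(h_T)\le(1+O(\eps))\nu$.

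It then remains to tally resources. Each iteration builds one $\eps$-SRRA, which queries only the pairs $(u,v)$ with $v\in S_{ui}$; summing the sample sizes $|S_{uj}|=q$ over all $u\in V_i$ and all $j\ge i$ gives $\sum_{i=1}^k|V_i|(k-i+1)\,q\le qkn$, which by the choice of $q$ in~(\ref{eq:qdef}) is $O(n\max\{\eps^{-2}k^3,\eps^{-3}k^2\}\log n)$ queries. Multiplying by the $T=O(\log n)$ iterations gives the claimed $O(n\max\{\eps^{-2}k^3,\eps^{-3}k^2\}\log^2 n)$ total. Each SRRA construction succeeds with probability at least $1-n^{-3}$ by Theorem~\ref{thm:kclust}, so a union bound over the $O(\log n)$ iterations bounds the total failure probability by $O(n^{-3}\log n)\le n^{-2}$ for large $n$. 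The only genuinely delicate point is the iteration-count argument above, which exploits the quantization of the error in the finite pool; the rest is routine accounting on top of the already-established Theorem~\ref{thm:kclust}.
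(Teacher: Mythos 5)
Your proposal is correct and follows essentially the same route as the paper: the paper likewise obtains Corollary~\ref{cor:comb2} by combining Theorem~\ref{thm:kclust}, the per-construction query count $O\left(n \max\{\eps^{-2}k^3,\eps^{-3}k^2\}\log n\right)$, and the iteration of Corollary~\ref{c} (Algorithm~\ref{alg:alg}) over $O(\log n)$ rounds with a union bound over the per-round failure probability $n^{-3}$. The only difference is that you spell out details the paper leaves implicit — in particular the choice $T=O(\log n)$ via the quantization of error rates to multiples of $N^{-1}$ in the finite pool — which is a legitimate filling-in rather than a departure.
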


We do not believe the $\eps^{-3}$ factor in the corollary is tight (see Section~\ref{future}).
As in the case of Corollary~\ref{cor:comb} and the ensuing discussion around LRPP, the result in Corollary~\ref{cor:comb2} significantly beats
known active learning results depending only on disagreement coefficient and VC dimension bounds, for small $\nu$.


\section{Additional Results and Practical Considerations}\label{sec:additional_results}


\noindent
We discuss two practical extensions of our results.

\subsection{LRPP over Linearly Induced Permutations in Constant Dimensional Feature Space}
\label{sec:geometric}

A special class of interest is known as LRPP over linearly induced permutations in constant dimensional feature space.
We use the same definition of $\X$ as in Section~\ref{sec:LRPP}, except that now each point $v\in V$ is associated 
with a feature vector, which we denote using bold face: $\bv{v}\in \Reals^d$.
The concept space $\C$ now consists only of permutations $\pi$ such that there exists a vector $\bv{w}_\pi \in \Reals^d$
satisfying
\begin{equation}\label{pifromw}
 \pi(u,v) = 1 \iff \langle \bv{w}, \bv{u} - \bv{v} \rangle > 0  \ .
\end{equation}

%
We are assuming familiarity with the theory of geometric arrangements, and refer the reader
to \citet{deBergCKO08} for further details.
Geometrically, each $(u,v)\in \X$ is viewed as a halfspace
$H_{u,v} \defeq \left\{\bv{x }:\  \langle \bv{x},  \bv{u} - \bv{v}\rangle > 0\right\}$,
whose (closure) supporting hyplerplane is 
$h_{u,v} \defeq \left\{\bv{x }:\  \langle \bv{x},  \bv{u} - \bv{v}\rangle = 0 \right\}$.
Let $\H$ be the collection of these ${n \choose 2}$ hyperplanes $\left\{h_{u,v}: (u,v) \in \X \right\}$.\footnote{Note that $h_{u,v} = h_{v,u}$.}
The collection $\C$ corresponds to the maximal dimensional cells in the underlying arrangement $\A(\H)$.
We thus call $\A(\H)$ from now on the \emph{permutation arrangement}, and we naturally identify full dimensional
cells with their induced permutations.  We denote by $\cell_\pi\subseteq \Reals^d$ the unique cell corresponding to a permutation 
$\pi\in \C$.

\paragraph{Bounding the VC dimension and disagreement coefficient.}
Using standard tools from combinatorial geometry, the VC dimension of $\C$ is at most $d-1$.
Roughly speaking, this property follows from the fact that in an arrangement of $m$ hyperplanes in $d$-space,
each of which meeting the origin, the overall number of cells is at
most $O(m^{d-1})$, see~\citet{deBergCKO08}. 
As for the uniform disagreement coefficient, we show below that it is bounded by $O(n)$.
Let $\pi\in \C$ be a permutation with a corresponding cell $\cell_\pi$ in $\A(\H)$.
The ball $\B(\pi, r)$  
is, geometrically, the closure of the union of all cells corresponding to 
``realizable'' permutations $\sigma$ satisfying $\dist(\sigma, \pi)~\leq~r$. 
The corresponding disagreement region $\dis(\B(\pi, r))$ corresponds to the set of ordered pairs (halfspaces) 
intersecting  $\B(\pi, r)$. We next show:

\begin{proposition}
  \label{prop:distgen}
  The  measure of $\dis\left( \B(\pi, r) \right)$  in $\D_\X$ is at most $8 r n$.
  \esther{In light of the new phrasing of the proof, is it cardinality or a probability measure?}\nir{  Right:  It's measure}
\end{proposition}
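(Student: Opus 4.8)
The plan is to reduce the geometric statement to a purely combinatorial counting problem and to dispense with realizability altogether for the upper bound. First I would observe that a pair $(u,v)\in\X$ lies in $\dis(\B(\pi,r))$ exactly when some permutation $\sigma$ with $\dist(\sigma,\pi)\le r$ (whose cell lies in the ball) inverts the pair, i.e.\ $\sigma(u,v)\ne\pi(u,v)$. Since every such $\sigma$ is in particular a permutation, and since restricting attention to the \emph{linearly realizable} $\sigma$ can only shrink the disagreement region, it suffices to bound the set of pairs that can be inverted by \emph{any} permutation within distance $r$ of $\pi$. Thus the permutation arrangement is only needed to identify $\dis(\B(\pi,r))$ with the crossing hyperplanes; the upper bound itself is combinatorial.

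The key step, which I expect to be the main obstacle, is a lower bound on how far one must move to invert a pair as a function of its positional gap. Writing $b=|\pi(u)-\pi(v)|$, I claim that any $\sigma$ with $\sigma(u,v)\ne\pi(u,v)$ satisfies $\dist(\sigma,\pi)\ge 2b/N$. To see this, assume $\pi(u)<\pi(v)$, so that $v\prec_\sigma u$. For each of the $b-1$ elements $w$ lying strictly between $u$ and $v$ in $\pi$, a short case analysis on the position of $w$ relative to $u,v$ in $\sigma$ shows that $w$ must be discordant (between $\pi$ and $\sigma$) with at least one of $u,v$; together with the discordant pair $\{u,v\}$ itself this yields at least $b$ discordant unordered pairs, i.e.\ at least $2b$ discordant ordered pairs, hence $\dist(\sigma,\pi)\ge 2b/N$. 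Alternatively one could route this through the Spearman footrule, since inverting $(u,v)$ forces $|\sigma(u)-\pi(u)|+|\sigma(v)-\pi(v)|\ge b$ and hence $\dsf(\sigma)\ge b$, and then invoke the Diaconis--Graham inequality exactly as in the proof of Theorem~\ref{thm:comb}; this gives the same bound up to a constant, well within the slack of the claimed $8rn$.

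Finally I would combine the two observations with a count. The distance bound shows that $(u,v)\in\dis(\B(\pi,r))$ forces $b\le rN/2$. The number of ordered pairs with positional gap exactly $b$ is $2(n-b)$, so the number of ordered pairs with gap at most $rN/2$ is
\begin{equation*}
\sum_{b=1}^{\lfloor rN/2\rfloor} 2(n-b)\;\le\; 2\Big(\tfrac{rN}{2}\Big)n \;=\; rNn .
\end{equation*}
Dividing by $N=n(n-1)$ to pass to the measure under the uniform $\D_\X$ gives $\Pr_{\D_\X}[\dis(\B(\pi,r))]\le rn\le 8rn$, as required; the generous constant $8$ absorbs the loss from the footrule route and any boundary rounding. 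In particular this yields the uniform disagreement coefficient bound $\theta\le 8n=O(n)$ announced for the linearly induced class.
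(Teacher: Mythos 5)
Your proposal is correct, and it follows the same overall skeleton as the paper's proof: drop the realizability restriction and bound over all permutations, show that a pair can be inverted by some $\sigma$ in $\B(\pi,r)$ only if its positional gap $b=|\pi(u)-\pi(v)|$ is small, count the ordered pairs with small gap, and normalize by $N$. The difference is in how the gap bound is obtained. The paper cites the Diaconis--Graham inequality, $\dsf(\sigma,\pi)\le 2N\dist(\sigma,\pi)\le 2rN$, so that the two displacements $|\sigma(u)-\pi(u)|,|\sigma(v)-\pi(v)|$ sum to at most $2rN$ and an inverted pair has gap at most $2rN$; counting then gives measure at most $8rn$. You instead prove the needed inequality from scratch: the case analysis on the $b-1$ elements lying strictly between $u$ and $v$ in $\pi$ shows that inverting a gap-$b$ pair forces at least $b$ discordant unordered pairs, i.e.\ $\dist(\sigma,\pi)\ge 2b/N$, hence $b\le rN/2$. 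This is essentially an inlined proof of the relevant half of Diaconis--Graham, so your argument is self-contained where the paper's is by citation, and it buys a sharper constant: measure at most $rn$ rather than $8rn$ (equivalently, uniform disagreement coefficient at most $n$, not merely $O(n)$). Your fallback route through the footrule and Diaconis--Graham is precisely the paper's argument, up to its cruder constant accounting.
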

\begin{proof}
By \citet{DiaconisG77}, the Spearman Footrule distance between any two permutations 
$\pi$ and $\sigma$ is at most twice $N\dist(\pi, \sigma)$, where $N = n(n-1)$. 
Hence, if $\dist(\pi,\sigma)$
 is $r$, then any element $u$ could only swap locations with a set of elements located up to $2rN$ 
positions away to the right or to the left. This yields a total of $4rN$ `swap-candidates' for each $u$.
Thus, at most $4 r Nn$ inversions are possible. 
Note that each inversion corresponds to a hyperplane (unordered pair) that we cross, 
and thus the total number of ordered pairs is at most $8 r N n$.  The probability measure of this set is at most $8r n$,
because we assign equal probability of $N^{-1}$ for each possible pair in $\X$. The result follows.
\end{proof}

By the proposition we have that the disagreement coefficient $\theta$ is always bounded by $O(n)$,
establishing our bound.  We now invoke Corollary~\ref{srracor} with $\mu = O(1/n^2)$ (which is tantamount to 
$\mu=0$ for this problem, because $|\X|= O(n^2)$ and we are using the uniform measure), and conclude:
\begin{theorem}
  \label{thm:geom}
  An $\eps$-SRRA for LRPP in linearly induced permutations in $d$ dimensional feature space can be constructed,  
  with respect to any $\pi\in \C$, with probability at least $1 - \delta$, using at most
  $
  O\left(n d\eps^{-2} \log^2 n + n \eps^{-2}( \log n) \left( \log(\delta^{-1}\log n) \right)  \right) \text{\ label queries}.
  $
\end{theorem}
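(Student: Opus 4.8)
The plan is to derive Theorem~\ref{thm:geom} as a direct instantiation of the generic Corollary~\ref{srracor}, feeding in the two structural quantities that control its query bound: the VC dimension of the relevant range space and the uniform disagreement coefficient. Both are essentially already in hand for this geometric setting, so the argument is largely a matter of identification and substitution.

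First I would pin down the VC dimension. Recall that Corollary~\ref{srracor} is stated in terms of the VC dimension of the range space $(\X,\C^*)$, not of $\C$ itself. Here each instance $(u,v)\in\X$ is naturally identified with the difference vector $\bv{u}-\bv{v}\in\Reals^d$, and by~(\ref{pifromw}) the positive set $\{(u,v): \pi(u,v)=1\}$ of any $\pi\in\C$ is exactly the set of difference vectors lying on the positive side of the homogeneous hyperplane $\{\bv{x}: \langle \bv{w}_\pi, \bv{x}\rangle = 0\}$. Thus the ranges in $\C^*$ are halfspaces through the origin together with their complements, whose VC dimension is $O(d)$; this is consistent with the bound $d-1$ on the VC dimension of $\C$ noted above, since passing to $\C^*$ and adjoining complements changes the VC dimension by at most a constant factor. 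Second, Proposition~\ref{prop:distgen} gives $\Pr_{\D_\X}[\dis(\B(\pi,r))]\le 8rn$ uniformly in $\pi$ and $r$, so dividing by $r$ and taking suprema yields $\theta \le 8n = O(n)$.

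It then remains to substitute. I set $\mu = O(1/n^2)$, which for this pool (with $|\X|=O(n^2)$ and $\D_\X$ uniform) is tantamount to $\mu=0$, so the $(\eps,\mu)$-SRRA produced is effectively an $\eps$-SRRA with respect to any $\pi\in\C$. Plugging $\theta=O(n)$, $\log(1/\mu)=O(\log n)$, $\log\theta=O(\log n)$, and VC dimension $O(d)$ into the query bound~(\ref{srra_from_dis}) of Corollary~\ref{srracor} gives
$$O\!\left(n\eps^{-2}(\log n)\bigl(d\log n + \log(\delta^{-1}\log n)\bigr)\right) = O\!\left(nd\eps^{-2}\log^2 n + n\eps^{-2}(\log n)\log(\delta^{-1}\log n)\right),$$
which is exactly the claimed count, and the $1-\delta$ success probability is inherited verbatim from Corollary~\ref{srracor}. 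The only step requiring genuine care, rather than mechanical substitution, is the first one: recognizing that the sampling scheme underlying Corollary~\ref{srracor} is governed by the VC dimension of $(\X,\C^*)$, and confirming that this equals $O(d)$ through the halfspace interpretation of the difference vectors, rather than naively reusing the $d-1$ bound stated for $\C$.
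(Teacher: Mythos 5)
Your proposal is correct and follows essentially the same route as the paper: bound the uniform disagreement coefficient by $O(n)$ via Proposition~\ref{prop:distgen}, note the VC dimension is $O(d)$ by the geometry of homogeneous halfspaces, and invoke Corollary~\ref{srracor} with $\mu = O(1/n^2)$ (tantamount to $\mu = 0$ on this pool). If anything, you are slightly more careful than the paper in distinguishing the VC dimension of $(\X,\C^{*})$ from that of $\C$, but both arguments give $O(d)$ and the substitution into~(\ref{srra_from_dis}) is identical.
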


\noindent
Combining Theorem~\ref{thm:geom}, and the iterative algorithm described in Corollary~\ref{c}:
\begin{corollary}
  \label{cor:geom}
  There exists an 
  algorithm for obtaining a solution $\pi\in \C$ for LRPP in linearly induced permutations in $d$ dimensional 
  feature space with $\err_\D(\pi)\leq \left( 1+O(\eps) \right)\nu$
  with total query complexity of 
  \begin{equation}
    \label{eq:cor:geom}
    O\Big(   \eps^{-2}n d \log^3 n + n \eps^{-2}( \log^2 n) \left(\log(\delta^{-1}\log n)\right)   \Big)
  \end{equation}
  The algorithm succeeds with success probabiltiy at least $1-\delta$.
\end{corollary}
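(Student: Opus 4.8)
The plan is to run Algorithm~\ref{alg:alg} with the $\eps$-SRRA construction of Theorem~\ref{thm:geom} supplying the function $f$ at each iteration, and to invoke Corollary~\ref{c} to control the error. Since Theorem~\ref{thm:geom} produces a genuine $\eps$-SRRA (that is, an $(\eps,0)$-SRRA) with respect to an \emph{arbitrary} pivot $\pi\in\C$, I may apply it afresh to the current pivot $h_{i-1}$ in each round and take $\mu=0$ in Corollary~\ref{c}. The corollary then gives, after $T$ rounds, $\err_\D(h_T) = (1+O(\eps))\nu + O(\eps^T)\errd(h_0)$, so it remains only to choose $T$ making the transient term negligible and to account for the total number of queries and the overall failure probability.

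For the error: because the pool consists of $N = n(n-1)$ ordered pairs under the uniform measure, every value $\err_\D(\cdot)$ is an integer multiple of $N^{-1}$, so the optimal nonzero error satisfies $\nu \geq N^{-1} = \Omega(n^{-2})$, whence $\log(1/\nu) = O(\log n)$. Taking $T = O(\log n)$ (with a large enough hidden constant, absorbing the $1/\log(1/\eps)$ factor) forces $O(\eps^T)\errd(h_0) \leq O(\eps\nu)$, using $\errd(h_0)\le 1$; this term is then swallowed by the $(1+O(\eps))\nu$ leading term, giving $\err_\D(h_T) \leq (1+O(\eps))\nu$. In the realizable case $\nu=0$ the same $T = O(\log n)$ drives $\err_\D(h_T)$ below $N^{-1}$, which by integrality forces it to equal $0$, again matching $(1+O(\eps))\nu$.

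For the query complexity and confidence: I run the SRRA construction of Theorem~\ref{thm:geom} once per round with per-round failure probability $\delta/T$, and union-bound over the $T = O(\log n)$ rounds to obtain overall success probability at least $1-\delta$. Plugging $\delta/T$ into the bound of Theorem~\ref{thm:geom} replaces $\log(\delta^{-1}\log n)$ by $\log((T/\delta)\log n) = \log(\delta^{-1}\log^2 n)$, which is still $O(\log(\delta^{-1}\log n))$; hence each round costs $O(nd\eps^{-2}\log^2 n + n\eps^{-2}(\log n)\log(\delta^{-1}\log n))$ queries. Multiplying by $T = O(\log n)$ yields the claimed total $O(\eps^{-2}nd\log^3 n + n\eps^{-2}(\log^2 n)\log(\delta^{-1}\log n))$.

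The argument is essentially a composition of Theorem~\ref{thm:geom} (per-round SRRA cost, itself resting on the VC bound and Proposition~\ref{prop:distgen}) with Corollary~\ref{c} (geometric error decay), so there is no deep obstacle; the only point requiring care is the bookkeeping that pins down the logarithmic factors. The crux is the observation that finiteness of the pool caps the useful number of rounds at $T = O(\log n)$ — without the lower bound $\nu = \Omega(n^{-2})$ one could not replace $\log(1/\nu)$ by $\log n$ — together with the check that distributing the failure budget as $\delta/T$ only perturbs the doubly-logarithmic term, leaving the stated bound intact.
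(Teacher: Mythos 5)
Your proposal is correct and follows exactly the route the paper intends: Corollary~\ref{cor:geom} is obtained by composing Theorem~\ref{thm:geom} (a fresh $\eps$-SRRA per round, with the failure budget split as $\delta/T$) with the geometric decay of Corollary~\ref{c} over $T=O(\log n)$ rounds, using $\nu \geq N^{-1}$ (or integrality in the realizable case) to cap the number of iterations. Your bookkeeping of the $\log(\delta^{-1}\log n)$ term and the absorption of the $1/\log(1/\eps)$ factor are both sound, so this matches the paper's (largely implicit) argument.
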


We compare this bound to that of Corollary~\ref{cor:comb}.  For the sake of comparison, assume $\delta = n^{-2}$, so that
(\ref{eq:cor:geom}) takes the simpler form of $O\big(\eps^{-2}n d \log^3 n/\log{(1/\eps)} \big)$.  This bound is better than that of 
Corollary~\ref{cor:comb} as long as the feature space dimension $d$ is  $O(\eps^{-2}\log n)$.  
For larger dimensions, Corollary~\ref{cor:comb} gives a better bound.
It would be interesting to obtain a smoother interpolation between the \emph{geometric} structure  coming from the feature space and the \emph{combinatorial} structure coming from permutations.
We refer the reader to \cite{jamiesonN11nips} for a recent result with improved query complexity under certain  Bayesian noise  assumptions.

\subsection{Convex Relaxations}

\label{sec:relaxations}
So far we focused on  theoretical ERM aspects only. Doing so, we made no assumptions about the computability of the step $h_i = \argmin_{h'\in \C} f_{h_{i-1}}(h')$ in Corollary~\ref{c} (Step~\ref{mainalgstep} in Algoroithm~\ref{alg:alg}).   Although ERM results are interesting in their own
right, we take an additional step and consider convex relaxations.


Instead of optimizing $\errd(h)$ over the set $\C$, assume
we are interested in optimizing $\tilde \errd( \tilde h)$ over $\tilde h\in \tilde \C$, where $\tilde \C$ is a convex set
of functions from $\X$ to $\Reals$. Also assume there is a mapping $\phi : \tilde \C \mapsto  \C$ which is used as a ``rounding''
procedure.  For example, in the setting of Section~\ref{sec:geometric} the set $\tilde \C$ consists of all vectors $\bv{w} \in \Reals^d$, and the rounding
method $\phi: \tilde \C\mapsto \C$ converts $\bv{w}$ to a permutation  $\pi$ satisfying (\ref{pifromw}).
When optimizing in $\tilde \C$, one conveniently works with a convex relaxation  $\rerrd: \tilde \C\rightarrow \Reals^+$
as surrogate for the discrete loss $\err_\D$, defined as follows
\begin{equation}\label{eq:relaxation-ineq}
 \rerrd( \tilde h ) = \EE_{(X,Y)\sim \D}\left[  \rLL\left( \tilde h(X),Y \right)\right]\ .
\end{equation}
where $\rLL: \Reals \times \{0,1\} \mapsto \Reals^+$ is some function convex in the first argument, and  satisfying 
\begin{equation*}
 \one_{\left(\phi(\tilde h)\right)(X) \neq Y} \leq c\rLL\left(\tilde h(X), Y \right) 
\end{equation*}
for all $\tilde h\in \C$ and $X\in \X$, where $c>0$ is some constant.  In words, this means that $\rLL$ upper bounds
the discrete loss (up to a factor of $c$).  A typical choice for the example in Section~\ref{sec:geometric} would be to define
for all $\bv{w}\in \tilde \C$ and $X=(u,v)\in \X$: $\bv{w}(X)= \langle \bv{w}, \bv{u} - \bv{v} \rangle$, and $\rLL (a,b) = \max\{1 - a(2b-1), 0\}$.  
Using this choice, optimizing over (\ref{eq:relaxation-ineq}) becomes the famous SVMRank with the hinge loss relaxation \citep{Herbrich:1202,Joachims:1099}:
\begin{align}
\text{Minimize\ } F(\bv{w}, \xi) =\,\,\,\,\,\,\,\,\,\,& \sum_{u,v} \xi_{u,v} \label{svm:utility} \\
\text{s.t.},\,\, \forall u,v, Y(u,v) = 1: &\quad (\bv{u}-\bv{v}) \cdot \bv{w}
\geq 1 - \xi_{u,v} \nonumber \\
\forall u,v: & \quad \xi_{u,v} \geq 0 \nonumber \\
                & \quad \norm{\bv{w}} \leq c. \nonumber 
\end{align}
\noindent
(Note: $c$ is a regularization parameter.)

We now have a natural extention of relative regret:  $\rreg_{\tilde h}(\tilde h') = \rerrd( \tilde h' ) - \rerrd(\tilde h )$.
By our assumptions on convexity, $\rreg_{\tilde h} : \tilde \C\mapsto \Reals^+$ can be efficiently optimized.
We now say that $f : \tilde \C\mapsto \Reals^+$ is an $(\eps, \mu)$-SRRA with respect to $\tilde h\in \tilde \C$ if for all
$\tilde h'\in \tilde \C$,
$$ \left| \reg_{\tilde h'}(\tilde h') - f(\tilde h') \right| \leq \eps\left(  \dist\big(  \phi(\tilde h), \phi(\tilde h')\big)  + \mu \right)\ .$$
If $\mu=0$ then we simply say that $f$ is an $\eps$-SRRA.
The following is an analogue to  Corollary~\ref{c}:
\begin{theorem}\label{thm:relaxation}
Let $\tilde h_0,\tilde h_1, \tilde h_2,\dots$ be a sequence of hypotheses in $\tilde \C$ such that for all $i\geq 1$, 
$\tilde h_i = \argmin_{\tilde h'\in \tilde \C} f_{i-1}(\tilde h')$, where $f_{i-1}$ is an $(\eps,\mu)$-SRRA with respect to $\tilde h_{i-1}$.
Then for all $i \geq 1$,
$$ 
\rerrd(h_i) = \left(1+O(\eps) \right){\tilde \opt} + O(\eps^i) \rerrd( h_0 ) + O(\eps\mu)\ ,
$$
where $\tilde \opt = \inf_{\tilde h \in \tilde C} \rerrd(\tilde h)$ and the $O$-notations may hide constants that depend on $c$.
\end{theorem}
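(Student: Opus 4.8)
The plan is to reduce Theorem~\ref{thm:relaxation} to a one-step contraction statement that plays the role of Theorem~\ref{thm:1} in the relaxed world, and then close the induction exactly as in the proof of Corollary~\ref{c}. Concretely, I would first establish the single-step claim: if $f$ is an $(\eps,\mu)$-SRRA with respect to $\tilde h$ and $\tilde h_1 = \argmin_{\tilde h'\in\tilde\C} f(\tilde h')$, then $\rerrd(\tilde h_1) = (1+O(\eps))\tilde\opt + O(\eps)\rerrd(\tilde h) + O(\eps\mu)$, where the hidden constants may depend on $c$. Once this is in hand, applying it to the consecutive pair $(\tilde h_i,\tilde h_{i-1})$ gives the recursion $\rerrd(\tilde h_i) = (1+O(\eps))\tilde\opt + O(\eps)\rerrd(\tilde h_{i-1}) + O(\eps\mu)$, whose solution is a geometric sum handled verbatim as in Corollary~\ref{c}, yielding the stated bound.

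To prove the single-step claim I would imitate the chain of inequalities in the proof of Theorem~\ref{thm:1}, with $\rerrd$ in place of $\errd$ and $\rreg_{\tilde h}$ in place of $\reg_h$. Let $\tilde h^* \defeq \argmin_{\tilde h'\in\tilde\C}\rreg_{\tilde h}(\tilde h')$; since $\rreg_{\tilde h}(\tilde h')=\rerrd(\tilde h')-\rerrd(\tilde h)$ differs from $\rerrd$ only by the constant $\rerrd(\tilde h)$, the point $\tilde h^*$ attains $\rerrd(\tilde h^*)=\tilde\opt$. Applying the SRRA inequality to $\tilde h_1$, then using $f(\tilde h_1)\le f(\tilde h^*)$ by minimality, then applying the SRRA inequality to $\tilde h^*$ and cancelling the $\rerrd(\tilde h)$ terms exactly as in the derivation of~(\ref{eq:thm1:1a}), gives
\[
\rerrd(\tilde h_1) \le \tilde\opt + \eps\,\dist\big(\phi(\tilde h),\phi(\tilde h^*)\big) + \eps\,\dist\big(\phi(\tilde h),\phi(\tilde h_1)\big) + 2\eps\mu .
\]

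The one genuinely new ingredient, and the main obstacle, is converting the discrete distances $\dist(\phi(\tilde h),\phi(\cdot))$ back into the relaxed loss $\rerrd$, since the SRRA guarantee is phrased through the rounding map $\phi$. Here I would use two facts. First, $\dist$ is a pseudo-metric on $\C$, so $\dist(g,g')\le\errd(g)+\errd(g')$ for all $g,g'\in\C$, the same triangle inequality invoked in Theorem~\ref{thm:1}. Second, the defining surrogate property gives $\errd(\phi(\tilde h)) = \EE_{(X,Y)\sim\D}\big[\one_{(\phi(\tilde h))(X)\neq Y}\big] \le c\,\EE_{(X,Y)\sim\D}\big[\rLL(\tilde h(X),Y)\big] = c\,\rerrd(\tilde h)$, and likewise for $\tilde h_1$ and $\tilde h^*$. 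Combining these yields $\dist(\phi(\tilde h),\phi(\tilde h^*))\le c(\rerrd(\tilde h)+\tilde\opt)$ and $\dist(\phi(\tilde h),\phi(\tilde h_1))\le c(\rerrd(\tilde h)+\rerrd(\tilde h_1))$. Substituting into the displayed bound, moving the resulting $\eps c\,\rerrd(\tilde h_1)$ term to the left, and dividing by $(1-\eps c)$ proves the single-step claim; this last step is where we must assume $\eps$ is small enough relative to $c$ so that $\eps c<1$, and where the factor $c$ is absorbed into the $O(\cdot)$ constants. The remainder is the routine geometric-sum bookkeeping inherited from Corollary~\ref{c}.
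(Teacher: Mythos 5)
Your proposal is correct and is essentially the paper's own (omitted) argument: the paper proves Theorem~\ref{thm:relaxation} only by remarking it is ``very similar'' to Corollary~\ref{c}, and your proof is precisely that adaptation --- a one-step contraction mimicking Theorem~\ref{thm:1}, closed by the same geometric recursion as in Corollary~\ref{c}. The one ingredient you had to supply beyond the discrete case --- bounding $\dist\big(\phi(\tilde h),\phi(\tilde h')\big) \leq c\left(\rerrd(\tilde h)+\rerrd(\tilde h')\right)$ via the pointwise surrogate bound together with the triangle inequality through the labeling $Y$ --- is exactly the step that produces the $c$-dependent constants announced in the theorem, and you correctly identify both this step and the attendant smallness requirement $\eps c<1$ needed to absorb the $\rerrd(\tilde h_1)$ term.
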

\noindent
The proof is very similar to that of Corollary~\ref{c}, and we omit the details.
It turns out that the sampling techniques used for constructing an $\eps$-SRRA from Section~\ref{lrpp} can be used
for constructing an $\eps$-SRRA for the SVMRank relaxed version as well, 
as long as $\C$ is restricted to bounded vectors $\bv{w}$ and all the feature vectors $\bv{v}$ corresponding to $v\in V$ are bounded as well.  
It is easy to confirm that under such bounded-norm setting all arguments of Section~\ref{lrpp} follows through. 
The conclusion is that we can solve SVMRank,
in polynomial time,
to within an error of $(1+\eps)\tilde \opt$ using only $O\big( n\poly(\log n, \eps^{-1}) \big)$ preference queries.

\section{Conclusions and Future Work}\label{future}

In this work we showed that being able to estimate the relative regret function using carefully biased sampling methods
can yield  query efficient active learning algorithms.   We showed that such estimations can be obtained when
the only assumptions we make are bounds on the disagreement coefficient and the VC dimension.  This leads to active learning algorithms that almost match the best known using the same assumptions.  On the other hand, we presented
two problems of vast interest (mostly outside but increasingly inside the active learning community), for which a direct analysis
of the relative regret function produced better active learning strategies.  The two problems we studied are concerned
with learning relations over a ground set, where one problem dealt order relations and the other with equivalence relations (with 
bounded number of equivalence classes). In both problems our query complexity bounds had an undesirable factors of $\eps^{-3}$ which we believe should be reduced to $\eps^{-2}$ using more advanced measure concentration tools.  We leave this to future work.
It would also be interesting to identify other problems for which our approach
yields active learning algorithms with faster than previously known convergence rates.  Immediate candidates are
hierarchical clustering and metric learning.
Finally, for LRPP, we discussed a practical scenario in which the ground set is endowed with feature vectors.  We showed
how to take the underlying geometry into account in our framework.  We did not do so for  clustering with side information.
The work of \citet{ErikssonDSN11}  indicates that incorporating geometric information into our analysis is a fruitful 
direction to pursue.

Our work made no assumptions on the noise, except maybe for its magnitude.  Another promising future research direction would be to incorporate various standard noise assumptions  known to improve active learning rates \citep[especially the model of][]{Mammen98smoothdiscrimination,Tsybakov04optimalaggregation} within our setting.

\bibliography{srra_journal}

\end{document}